\definecolor{gray}{rgb}{0.5,0.5,0.5}
\definecolor{TableHighlight}{rgb}{0.99,0.92,0.87}
\lstdefinestyle{pythonstyle}{
    language=Python,
    basicstyle=\ttfamily\tiny, 
    commentstyle=\color{green!40!black},
    keywordstyle=\color{blue},
    numberstyle=\tiny\color{gray},
    numbers=left,
    stepnumber=1,
    numbersep=5pt,
    backgroundcolor=\color{gray!5},
    frame=single,
    breaklines=true, 
    breakatwhitespace=true,
    showstringspaces=false,
    captionpos=b,
    tabsize=4,
}
\newtheorem{theorem}{Theorem}
\newtheorem{lemma}[theorem]{Lemma}
\newtheorem{Lemma*}[theorem]{Lemma}
\newtheorem{remark}[theorem]{Remark}
\newtheorem{definition}[theorem]{Definition}
\newtheorem{example}[theorem]{Example}
\def\jdlqed{\vbox{\hrule \hbox{\vrule\hbox to
5pt{\vbox to 6pt{\vfil}\hfil}\vrule}\hrule}}
\newcommand{\norm}[2]{{\left\Vert #1 \right\Vert}_{#2}}
\DeclareMathOperator*{\argmin}{arg\,min}
\def\comment#1{\textit{[#1]}}
\def\comment#1{}
\begin{document}


\title{Tropical Decision Boundaries for Neural Networks Are Robust
  Against Adversarial Attacks}
\author{Kurt Pasque, Christopher Teska, Ruriko Yoshida,\\ Keiji Miura,
  and Jefferson Huang}



\maketitle

\begin{abstract}
We introduce a simple, easy to implement, and computationally efficient tropical convolutional neural network architecture that is robust against adversarial attacks. We exploit the tropical nature of piece-wise linear neural networks by embedding the data in the \emph{tropical projective torus} in a single hidden layer which can be added to any model. We study the geometry of its decision boundary theoretically and show its robustness against adversarial attacks on image datasets using computational experiments.
\end{abstract}



\section{Introduction}
Artificial Neural Networks have demonstrated exceptional capability in the fields of computer vision, natural language processing, and genetics. However, they have similarly demonstrated a concerning vulnerability to adversarial attacks \cite{DBLP:conf/iclr/HuangPGDA17,papernot2018cleverhans}. As neural networks become prevalent in critical applications such as autonomous driving, healthcare, and cybersecurity, the development of adversarial defense methodologies will become central to the reliability and ultimate success of those efforts (for example, see \cite{madry2018towards,Kotyan,carlini2017evaluating,croce2019provable,croce2020provable} and references within).
Significant work in adversarial defense has focused on robust optimization for adversarial training (for example, see \cite{QIAN2022108889}). This approach involves a minimax game where the attacker attempts to maximize a loss function while a defender attempts to minimize this loss. This methodology degrades the performance of the network on clean inputs relative to the clean model \cite{10.5555/3454287.3455387} while increasing computational complexity.

Accumulating evidence supports the robustness of ``low-rank'' models against adversarial attacks in image recognition \cite{yang2019menet, DBLP:journals/corr/abs-2212-01957, wang2023transformed}.
There, the low-rank models were realized by different methods: matrix completion \cite{yang2019menet}, model compression \cite{DBLP:journals/corr/abs-2212-01957} or tensor SVD \cite{wang2023transformed},
suggesting that the low-rankness seems to be the universal key for adversarial robustness.
Then, it is expected that the tropical metric based on the tropical geometry \cite{MS, ETC}, that tends to favor a sparse structure in machine learning \cite{YOSHIDA202377, MY, tropicalLogit, tropicalNN}, may also be effective for adversarial robustness.

Tropical geometry has been applied to describing the geometry of deep neural networks with piecewise linear activation functions, such as two of the most popular and widely used activators: rectified linear units (ReLUs) and maxout units.
In ~\cite{pmlr-v15-glorot11a} Glorot et al.~showed that neural networks with ReLUs work very well and outperform neural networks with traditional choices of activation functions using empirical studies.
Later, Zhang et al.~showed that the decision boundary of a deep neural network with ReLU activation functions is a tropical rational function with the max-plus algebra \cite{zhang2018tropical}.
Goodfellow et al.~\cite{pmlr-v28-goodfellow13} introduced maxout networks, deep feed-forward neural networks with maxout units whose activation is the maximum of arbitrarily many input neurons.
In \cite{Charisopoulos2018ATA} Charisopoulos and Maragos showed that the maxout activation function fits input data by a tropical polynomial in terms of the max-plus algebra.
Although this body of work has clearly shown that deep neural networks with ReLU and maxout activators can be understood as operations in terms of tropical geometry with the max-plus algebra, they only handled the neural networks whose input domain is in the Euclidean space.

Another strain of works utilized tropical geometry more natively and enabled to handle the deep neural networks whose input domain is in the {\em tropical projective torus}.
In 2023, Yoshida et al.~in \cite{tropicalNN} proposed {\em tropical neural networks}, which embeds the input vector in the tropical projective torus with {\rm tropical activation functions}.
A tropical neural network is a generalization of the tropical logistic regression model proposed by Aliatimis et al.~\cite{tropicalLogit} and a tropical activation function fits data with the tropical Laplace distribution centered around a tropical Fermat-Weber point within the same class.
Although these works truly employed the tropical metric of the input space,
they only used it to embed general input vectors, which is not necessarily an image, into the first hidden layer.

In this paper, we introduce a simple, easy to implement, and efficient convolution neural network (CNN) robust against adversarial attacks using tropical embedding layers. 
One idea to construct a decision boundary with a low-rank nature for image classification is to exploit tropical operations in the output layers.
Therefore, we propose a {\em tropical decision boundary} in the last layer which is a native operation over the tropical projective torus in terms of the max-plus algebra.
This approach results in well defined decision boundaries and robustness to adversarial attack with a minimal increase in computational complexity relative to standard piecewise-linear neural networks.  Especially we show that adversarial attacks developed by Carlini \& Wagner in \cite{Carlini17} may not be able to reach the optimal attack against our novel convolution neural networks with tropical embedded last layer due to the discrete nature of the geometry of its decision boundary.  

This paper is organized as follows.:
We begin with a primer on tropical geometry then develop a tropical decision boundary and a tropical convolution for deep neural networks. We provide a theoretical analysis of the geometry of the tropical decision boundaries and how they are learned. Finally, we demonstrate the robustness of the proposed tropical decision boundaries against adversarial attacks in computational experiments.

We summarize our contributions as follows: 
\begin{itemize}
    \item We propose a tropical decision boundary and tropical CNN.
    \item We describe the decision boundary via tropical balls.
    \item We demonstrate robustness against adversarial attacks via some theoretical properties and experimental computations.
\end{itemize}

\section{Basics in Tropical Geometry and Tropical Bisectors}
Here we consider the tropical projective torus 
\[
\mathbb{R}^d/\mathbb{R}{\bf 1}:= \left\{x \in \mathbb{R}^d\mid x:=(x_1, x_2, \ldots , x_d) = (x_1+c, x_2+c, \ldots , x_d+c), \, \forall c \in \mathbb{R}\right\},
\]
where ${\bf 1} = (1, \ldots , 1) \in \mathbb{R}^{d}$.  
Note that for any $x = (x_1, x_2, \ldots , x_d) \in \mathbb{R}^d/\mathbb{R}{\bf 1}$,
\[
x = (x_1, x_2, \ldots , x_d) = (0, x_2 - x_1, \ldots , x_d - x_1) \in \mathbb{R}^d/\mathbb{R}{\bf 1},
\]
which means that the tropical projective torus $\mathbb{R}^d/\mathbb{R}{\bf 1}$ is isomorphic to $\mathbb{R}^{d-1}$. 

\begin{definition}[Tropical Arithmetic Operations]
    The tropical semiring $(\,\mathbb{R} \cup \{-\infty\},\oplus,\odot)\,$ with the max-plus algebra is a semiring with the tropical arithmetic operations of addition and multiplication defined as:
$$a \oplus b := \max\{a, b\}, ~~~~ a \odot b := a + b $$
for any $a, b \in \mathbb{R}\cup\{-\infty\}$.   Note that $-\infty$ is the identity element for the tropical addition operation $\oplus$, and $0$ is the identity element for the tropical multiplication operation $\odot$.
\end{definition}

\begin{definition}[Tropical Metric]\label{def:tropicalMetric}
    The {\em tropical metric} is defined for any $x = (x_1, \ldots , x_d), y = (y_1, \ldots , y_d) \in \mathbb{R}^d/\mathbb{R}{\bf 1}$ as
    \[
    d_{\rm tr}(x, y) = \max_{i\in \{1, \ldots , d\} }\{x_i - y_i\} - \min_{i\in \{1, \ldots , d\} }\{x_i - y_i\}.
    \]
\end{definition}
\begin{remark}
    The tropical metric $d_{\rm tr}$ is a well-defined metric over the tropical projective torus $\mathbb{R}^d/\mathbb{R}{\bf 1}$ \cite{MLKY}.
\end{remark}
\begin{definition}[Tropical Ball]\label{def:tropicalBall}
    A {\em tropical ball} $B_x(r)$ centered at $x \in \mathbb{R}^d/\mathbb{R}{\bf 1}$ with radius $r > 0$ under the tropical metric $d_{\rm tr}$ is defined as
\[
B_x(r) = \left\{y \in \mathbb{R}^d/\mathbb{R}{\bf 1}: d_{\rm tr}(x, y) \leq r\right\}.
\]
\end{definition}

\begin{remark}
    Any tropical ball $B_x(r) \subset \mathbb{R}^d/\mathbb{R}{\bf 1}$ can be viewed as a classical polytope in $\mathbb{R}^{d-1}$; see Theorem \ref{th:tropicalBall}.  
\end{remark}

\begin{definition}[Tropical Bisector]\label{def:bisector}
    Suppose $S \subset \mathbb{R}^d/\mathbb{R}{\bf 1}$ is a finite set.  Then the {\em tropical bisector} of $\mathcal{S}$ is defined as
    \[
    bis(S):=\left\{x \in \mathbb{R}^d/\mathbb{R}{\bf 1} \mid d_{\rm tr}(x, a) = d_{\rm tr}(x, b), \, \mbox{ for } a, b \in {S}\right\}.
    \]
\end{definition}
For a finite set $S \subset \mathbb{R}^d/\mathbb{R}{\bf 1}$ and any $a_1, \ldots , a_k \in S$, define 
\[
bis(F_{i_1}, \ldots ,F_{i_l})(\{a_1, \ldots , a_k \}) = bis(\{a_1, \ldots , a_k \}) \cap (a_1 + F_{1_1}) \cap \ldots \cap (a_k + F_{k_l})
\]
where each $F_{i_j}$ is a full dimensional cone in the face fan of the tropical ball with respect to the max-plus algebra, which is also a classical polytope, and $a_i + F_{i_j}$ is a polyhedron which is a full dimensional cone $F_{i_j}$ is translated so that its unique vertex is $a_i$ for $i = 1, \ldots k$.

\begin{definition}[Definition 1 in \cite{CJS}]
    Suppose $S \subset \mathbb{R}^d/\mathbb{R}{\bf 1}$ is a finite set. Then the set $S$ is in {\em weak general position} with respect to a tropical ball $B_x(r)$ if no pair of points lies in a hyperplane parallel to a facet of $B_x(r)$.

    For every subset $a_1, \ldots , a_k \in S$ and for each neighborhood $U_i$ around $a_i$, if
    \[
    bis(F_{i_1}, \ldots ,F_{i_l})(\{a_1, \ldots , a_k \}) = \emptyset \Longleftrightarrow bis(F_{i_1+, \ldots ,F_{i_l}})(\{a'_1, \ldots , a'_k \}) = \emptyset 
    \]
    for $a'_1, \ldots , a'_k$ with $a'_i \in U_i$, then we say the set $S$ is in {\em general position} with respect to a tropical ball $B_x(r)$.
\end{definition}

\begin{example}
    Let $d=3$.  Consider points $x =(0, 0, 0), \, y = (0, w, 0) \in \mathbb{R}^3/\mathbb{R}{\bf 1}$ in Figure \ref{fig:bisector_stability}.  Then  when $w=1$ and $w=0$, $x$ and $y$ are not in weak general position, which means that they are not in general position. When $w=-1$, $x$ and $y$ are in weak general position, but not in general position.
\end{example}

\section{Tropical Convolutional Neural Networks}

\subsection{Tropical Embedding Layer}

\begin{definition}[Tropical Embedding Layer]
A {\em tropical embedding layer} takes a vector $x \in \mathbb{R}^d/\mathbb{R}\mathbf{1}$ as input, and the activation of the $j$-th neuron in the embedding layer is
\begin{equation} \label{eq:embed}
z_j = \max_i(x_i + w^{(1)}_{ji}) - \min_i(x_i + w^{(1)}_{ji}) = d_{\rm tr}(-{\bf w}^{(1)}_{j}, x).
\end{equation}
\end{definition}

\begin{remark}
Tropical embedding layers were originally developed in order to embed phylogenetic trees into a Euclidean space \cite{tropicalNN}.
But, as this paper shows, we found them to be effective for increasing the robustness of CNNs for image data.
\end{remark}

\subsection{Structure of a Tropical Convolutional Neural Network}

Suppose $f^{L-1}: \mathbb{R}^{n_1\times n_2\times n_3} \to \mathbb{R}^d/\mathbb{R}^1$ is the map of a classification convolutional neural network. Then $f^{L-1}(x)$, with input data $x \in \mathbb{R}^{n_1\times n_2\times n_3}$ of $k$ classes, is the output of the network. We then embed the output of $f^{L-1}(x)$ into the tropical projective torus with (\ref{eq:embed}).
\[
    f_j^L(x) = \max_i\{f_i^{L-1}(x) + w_{ji}^L\} - \min_i\{f_{ji}^{L-1}(x)+ w_{ji}^L\} \\
    = d_{tr}(-w_j^L, x) \ j \in {1, \dots, k}
\]
We show in the following section that the weights, $W^L$, train towards {\em Fermat-Weber points} associated with the $k$ classes. See \cite{BSYM}  for details on Fermat-Weber points and using gradient descent to find them.  

Each dimension of the output $f^L(x) \in \mathbb{R}^k$ is the tropical distance between the input and Fermat-Weber points of each class. We then classify the input with a softmin function:
\[
    \max_{j \in {1, \ldots, k}} \frac{{e^{-d_{tr}(-w_j,x)}}}{\sum_{i=1}^k e^{-d_{tr}(-w_i,x)}}.
\]

\begin{remark}
    Using Theorem B in \cite{ZHOU2020787} combined with Theorem 12 in \cite{tropicalNN}, a tropical CNN is an universal approximator of any function  $f: \mathbb{R}^d \to \mathbb{R}$.
\end{remark}

\section{Definition and Analysis of Tropical Decision Boundaries}\label{sec:Boundary}

\subsection{Decision Boundary of a ReLU Neural Network}

Zhang et al.~in \cite{zhang2018tropical} explicitly described the decision boundary of feedforward neural networks with ReLU activations using tools from tropical geometry. We briefly summarize their work here. First, consider the output $\nu(x)$ of the first layer of a neural network with the ReLU activations,
\[
\nu (x) = \max \{Ax + b, t\}
\]
where $A \in \mathbb{Z}^{p\times d}$, $b \in \mathbb{R}^p$, and $t \in (\mathbb{R} \cup \{-\infty\})^p$.  Let 
\[
A = A_+ - A_-
\]
where $A_+, A_- \in \mathbb{Z}^{p\times d}_{\geq 0}$ respectively denote the positive and negative parts of $A$.  Zhang et al.~\cite{zhang2018tropical} noticed that $\nu(x)$ can be written as
\[
\nu (x) = \max \{Ax + b, t\} = \max \{A_+x + b, A_- x + t\} - A_- x,
\]
which is a {\em tropical rational function}, i.e., a difference of tropical polynomials.  It follows by induction that a neural network with $L-1$ ReLU layers can be written as a tropical rational function. Using this fact, they showed its decision boundary is contained in the tropical hypersurface (the solution set) of a tropical polynomial and computed a tight upper bound on the number of line segments in the associated piecewise-linear decision boundary. Specifically, Zhang et al.~used {\em zonotopes}, which are polytopes computed from the Minkowski sum of a set of vectors, to describe the tropical hypersurface of a tropical polynomial.

\subsection{Decision Boundary of a Tropical Neural Network}
Suppose that we have a sample $\mathcal{S} = \{(x_1, y_1), \ldots , (x_n, y_n)\}$ where $y_i \in \{1, \ldots , C\}$ and $x_i \in \mathbb{R}^d/\mathbb{R}^\mathbf{1}$ for $i = 1, \ldots, n$.   Here we consider a tropical neural network with a tropical embedding layer of $k_1 + k_2 + \ldots + k_C$ neurons, where $k_1, k_2, \ldots, k_C \in \mathbb{N} := \{1, 2, \ldots \}$.
We denote its true optimal weight matrix by $W^* \in \mathbb{R}^{(k_1+ \dots + k_c) \times d}$ where the weights $w^*_{j,1}, \ldots , w^*_{j,k_j} \in \mathbb{R}^{1 \times d}$  are rows mapping inputs to neurons $z_{j, 1}, \ldots , z_{j, k_j}$ associated with class $Y = j$.
Then note that the tropical neural network classifies the observation $x_i$ as
\[
Y_i = j \mbox{ if  } \argmin_{w^* \in W^*} d_{\rm tr}(x_i, -w^*) \in \{w^*_{j,1}, \ldots , w^*_{j,k_j}\},
\]

Therefore the decision boundary of this tropical neural network is 
\[
\mathcal{B}:=
\left\{x \in \mathbb{R}^d/\mathbb{R}{\bf 1} \middle\vert
d_{\rm tr}(x, -w^*_{j, l_j}) = d_{\rm tr}(x, -w^*_{j', l_{j'}}), j \not = j', l_j \in [k_j], l_{j'} \in [k_{j'}]
\right\}
\]
where $[k_j]:=\{1, \ldots , k_j\}, \, [k_{j'}]:=\{1, \ldots , k_{j'}\}$.
Let $x_0 \in \mathbb{R}^d/\mathbb{R}{\bf 1}$ be on the decision boundary.  Then there exist $q \in [k_i]$ and $q' \in [k_{j}]$ for $i, j \in \{1, \ldots , C\}$ such that
\[
r:= d_{\rm tr}(x_0, -w^*_{i, q}) = d_{\rm tr}(x_0, -w^*_{j, q'}).
\]
Then note that 
\[
x_0 \in \left(\partial B_{-w^*_{i, q}}(r)\right)\cap  \left(\partial B_{-w^*_{j, q'}}(r)\right),
\]
where $\partial B_{x}(r)$ is the boundary of the tropical ball $B_{x}(r)$.  
\begin{theorem}[Section 3.1.1 in \cite{tropicalBall}]\label{th:tropicalBall}
    A tropical ball $B_{x}(r)$ is a polytrope, which is a tropical simplex (and hence a tropical polytope) that is also a classical polytope.
\end{theorem}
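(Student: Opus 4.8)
\emph{Proof proposal.} The plan is to reduce to the centered case and then describe $B_x(r)$ explicitly, exhibiting it simultaneously as a classically convex polytope and as the tropical convex hull of exactly $d$ points, so that ``polytrope'' and ``tropical simplex'' both fall out.

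First I would use translation invariance of $d_{\rm tr}$: since $d_{\rm tr}(x,y)=d_{\rm tr}(0,y-x)$ and translation by $-x$ is an isometry of $\R^d/\R{\bf 1}$ preserving both classical and tropical convexity, it suffices to treat $B_0(r)$. From $\max_i(-y_i)=-\min_i y_i$ and $\min_i(-y_i)=-\max_i y_i$ one rewrites
\[
B_0(r)=\Bigl\{y\in\R^d/\R{\bf 1}: \max_i y_i-\min_i y_i\le r\Bigr\}=\bigl\{y: y_i-y_j\le r \text{ for all } i,j\in\{1,\dots,d\}\bigr\}.
\]
Passing to the representative with $y_d=0$ identifies this with $\{y\in\R^{d-1}: -r\le y_i\le r,\ |y_i-y_j|\le r\}$, which is bounded and cut out by finitely many affine inequalities, hence a classical polytope; this also shows it is full dimensional in $\R^d/\R{\bf 1}\cong\R^{d-1}$ whenever $r>0$.

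Next I would identify $B_0(r)$ with the tropical convex hull $P$ of the $d$ points $v^{(k)}:=r\,e_k$, $k=1,\dots,d$. For $P\subseteq B_0(r)$: for any scalars $\lambda_1,\dots,\lambda_d$ and $M:=\max_k\lambda_k$, the $i$-th coordinate of $\bigoplus_k\lambda_k\odot v^{(k)}$ equals $\max(\lambda_i+r,\ \max_{k\ne i}\lambda_k)$, which lies between $M$ and $M+r$, so the coordinate spread is at most $r$. For $B_0(r)\subseteq P$: given $z\in B_0(r)$, set $\lambda_k:=z_k-r$; then the $i$-th coordinate of $\bigoplus_k\lambda_k\odot v^{(k)}$ is $\max\bigl(z_i,\ \max_{k\ne i}(z_k-r)\bigr)=z_i$, the last step using $z_k-z_i\le r$. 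Hence $B_0(r)$ is the tropical span of exactly $d$ points and is therefore a tropical polytope; being full dimensional, these $d$ generators are minimal, so $B_0(r)$ is a tropical simplex. Since it is simultaneously classically convex and tropically convex, it is by definition a polytrope, as claimed.

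The two inclusions above are routine max-plus bookkeeping and the identification of the quotient with $\R^{d-1}$ is standard, so the step I expect to require the most care is justifying the word ``simplex'': one must check that $re_1,\dots,re_d$ are genuinely in tropical general position, so that their span is $(d-1)$-dimensional rather than degenerate. Equivalently, one can verify that the matrix $C$ with $c_{ii}=0$ and $c_{ij}=r$ for $i\ne j$ is its own Kleene star, since it is idempotent under min-plus matrix multiplication because $r+r\ge r$ and $r+r\ge 0$; by the Joswig--Kulas correspondence between polytropes and Kleene stars this simultaneously recovers that $\{y: y_i-y_j\le c_{ij}\}=B_0(r)$ is a polytrope with exactly $d$ tropical vertices, giving a slicker but less self-contained route to the whole statement.
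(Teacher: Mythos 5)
Your proposal is correct, but note that the paper does not actually prove this statement: Theorem \ref{th:tropicalBall} is imported verbatim as a citation to Section 3.1.1 of the tropical-ball reference, and the surrounding text only uses the consequence (an explicit hyperplane description of each ball) via a further citation. So you have supplied a self-contained argument where the paper supplies none, and your argument is the standard one and checks out. The reduction to $B_0(r)$ by translation invariance, the H-description $\{y: y_i-y_j\le r \ \forall i,j\}$ establishing classical polytopality, and the two inclusions identifying $B_0(r)$ with the max-plus span of $re_1,\dots,re_d$ are all correct (in particular the bounds $M\le z_i\le M+r$ and the choice $\lambda_k=z_k-r$ do what you claim, working modulo $\R{\bf 1}$ as required). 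The one step you gloss over is the assertion that $d$ generators are \emph{minimal}: this needs the standard fact that the tropical convex hull of $m$ points is the image of $\R^m/\R{\bf 1}\cong\R^{m-1}$ under a piecewise-linear map and hence has dimension at most $m-1$, so a full-dimensional set in $\R^d/\R{\bf 1}$ cannot be tropically spanned by fewer than $d$ points; stating this explicitly (or citing it) would close the gap. Your closing observation that the matrix with $c_{ii}=0$, $c_{ij}=r$ is its own Kleene star is also correct and is essentially the form in which the paper later uses the result (the hyperplane representation $y_j-y_i\le -m_{ij}$ in its Kleene-star discussion), so it dovetails well with how the theorem is applied downstream.
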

Since tropical balls $B_{-w^*_{i,q}}(r)$ and $B_{-w^*_{j,q'}}(r)$ are classical polytopes and Proposition 2.14 in~\citep{Zhang_Vol} shows explicit hyperplane representations (sets of defining inequalities for each tropical ball) of them. Some of the segments of the decision boundary are defined by one or more linear equations which define at least two of the tropical balls $B_{-w^*_{i,1}}(r), \ldots , B_{-w^*_{i,k_i}}(r)$ and $B_{-w^*_{j,1}}(r), \ldots , B_{-w^*_{j,k_j}}(r)$. Therefore we have the following theorem:
\begin{theorem}
    The decision boundary, $\mathcal{B}$, is defined by a subset of the equations defining tropical balls $B_{-w^*_{i,1}}(r), \ldots , B_{-w^*_{i,k_i}}(r)$ and $B_{-w^*_{j,1}}(r), \ldots , B_{-w^*_{j,k_j}}(r)$ for $i, j \in \{1, \ldots , C\}$.  
\end{theorem}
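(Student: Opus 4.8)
\noindent\emph{Proof strategy.} The plan is to localise at an arbitrary point of $\mathcal{B}$, invoke the explicit facet (hyperplane) description of the two tropical balls that certify membership in $\mathcal{B}$, eliminate the common radius so that each piece of $\mathcal{B}$ is visibly cut out by those balls' defining equations, and then globalise using the finiteness of the relevant index data.

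First I would argue exactly as in the paragraph preceding Theorem~\ref{th:tropicalBall}: given $x_0 \in \mathcal{B}$, there are classes $i \neq j$, indices $q \in [k_i]$, $q' \in [k_j]$, and a common value $r = d_{\rm tr}(x_0, -w^*_{i,q}) = d_{\rm tr}(x_0, -w^*_{j,q'})$ with $x_0 \in \partial B_{-w^*_{i,q}}(r) \cap \partial B_{-w^*_{j,q'}}(r)$. Then I would apply Theorem~\ref{th:tropicalBall} together with the polytrope hyperplane representation of Proposition~2.14 in~\citep{Zhang_Vol}: a tropical ball $B_c(r) \subset \mathbb{R}^d/\mathbb{R}{\bf 1}$ is the set of $y$ with $(y_b - c_b) - (y_a - c_a) \leq r$ over all ordered pairs $a \neq b$ (a subset of which are facet-defining), so a point lies on $\partial B_c(r)$ iff at least one of these inequalities is tight there. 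Evaluating this at $x_0$ for $c = -w^*_{i,q}$ and $c = -w^*_{j,q'}$ produces ordered pairs $(a,b)$ and $(a',b')$ with $(x_{0,b} + w^*_{i,q,b}) - (x_{0,a} + w^*_{i,q,a}) = r$ and $(x_{0,b'} + w^*_{j,q',b'}) - (x_{0,a'} + w^*_{j,q',a'}) = r$.

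Subtracting these two equalities eliminates $r$ and leaves the linear equation
\[
(x_{0,b} - x_{0,a}) - (x_{0,b'} - x_{0,a'}) = w^*_{i,q,a} - w^*_{i,q,b} + w^*_{j,q',b'} - w^*_{j,q',a'},
\]
that is, the difference of one defining equation of $B_{-w^*_{i,q}}(r)$ and one of $B_{-w^*_{j,q'}}(r)$; equivalently $x_0$ lies on a facet hyperplane of a class-$i$ ball intersected with a facet hyperplane of a class-$j$ ball at the shared radius. Since the data $(i,j,q,q',a,b,a',b')$ ranges over a finite set, only finitely many such equations occur, and $\mathcal{B}$ is contained in --- and, I claim, cut out cellwise by --- the corresponding subset of the equations defining $B_{-w^*_{i,1}}(r), \dots, B_{-w^*_{i,k_i}}(r)$ and $B_{-w^*_{j,1}}(r), \dots, B_{-w^*_{j,k_j}}(r)$ for $i,j \in \{1,\dots,C\}$. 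To upgrade containment to ``defined by'', I would pass to the polyhedral subdivision of $\mathbb{R}^d/\mathbb{R}{\bf 1}$ on whose cells the coordinates attaining the maximum and the minimum in each map $x \mapsto d_{\rm tr}(x, -w^*_{i,l})$ are locally constant; on a fixed cell the active pairs and active classes do not change, so the trace of $\mathcal{B}$ there is precisely the solution set of the selected linear equations.

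I expect the genuine difficulty to be bookkeeping rather than geometry. Two points need care: the radius $r$ is not a single constant but varies along $\mathcal{B}$, so the statement must be read cellwise (or with $r$ treated as a free parameter), and the honest, $r$-free hyperplanes are the differences exhibited above; and one must absorb degeneracies --- several defining inequalities tight simultaneously, coincidences among $a,b,a',b'$ (which merely lower codimension), and the case $k_i, k_j > 1$ where more than one within-class ball is active at once. None of these obstruct the argument, but the step I would be most careful about is showing that $\mathcal{B}$ equals, and not merely sits inside, the union of the cellwise solution sets, which is where the finiteness of the subdivision is really used.
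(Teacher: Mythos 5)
Your proposal follows essentially the same route as the paper: the paper's argument is precisely the paragraph preceding the theorem, which places a boundary point $x_0$ in $\partial B_{-w^*_{i,q}}(r) \cap \partial B_{-w^*_{j,q'}}(r)$ for some common radius $r$, and then invokes Theorem~\ref{th:tropicalBall} together with the explicit hyperplane representation from Proposition~2.14 of~\citep{Zhang_Vol} to conclude that $\mathcal{B}$ is cut out by a subset of the balls' facet-defining equations. Your extra bookkeeping --- eliminating the point-dependent radius $r$ by subtracting the two tight facet equalities, and passing to a polyhedral subdivision to upgrade containment to equality --- is a careful refinement of the same argument (and addresses the fact, elided in the paper, that $r$ varies along $\mathcal{B}$) rather than a genuinely different proof.
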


\begin{example}\label{eg:contour}
    We consider $\mathbb{R}^3/\mathbb{R}{\bf 1}$ and $C = 2$. Suppose we have $w^*_{1,1} = (5, -5, 0), \,  w^*_{2,1} = (-5, 5, 0)$.  Then the contour plot and heat map plot are shown in Figure \ref{fig:contourplot}. 
    In this case, from Example 3.7 in \cite{tropicalBall}, we can find the sets of inequalities to define $B_{-w^*_{1,1}}(r)$ and $B_{-w^*_{2,1}}(r)$ as follows:
    \[
    B_{-w^*_{1,1}}(r) = \left\{x \in \mathbb{R}^3/\mathbb{R}{\bf 1}\middle\vert
    \begin{array}{ccc}
        x_3 &=& 0, \\
         x_1 &\geq & 5 - r,\\
         x_1 &\leq & -5 + r,\\
         x_2 &\geq & 5 - r,\\
         x_2 &\leq & -5 + r,\\
         x_1 - x_2 &\leq & 10+r,\\
         x_1 - x_2 &\geq & 10-r
         \end{array}
  \right\},
    \]
    and
        \[
    B_{-w^*_{2,1}}(r) = \left\{x \in \mathbb{R}^3/\mathbb{R}{\bf 1}\middle\vert
    \begin{array}{ccc}
        x_3 &=& 0, \\
         x_1 &\geq & -5 - r,\\
         x_1 &\leq & 5 + r,\\
         x_2 &\geq & -5 - r,\\
         x_2 &\leq & 5 + r,\\
         x_1 - x_2 &\leq & -10+r,\\
         x_1 - x_2 &\geq & -10-r
         \end{array}
  \right\}.
    \]
    In this case, the decision boundary consists of the points where for $r = \frac{d_{\rm tr}(-w^*_{1,1}, -w^*_{2,1})}{2} = \frac{d_{\rm tr}((5, -5, 0), (-5, 5, 0))}{2} = 10$ the equation $x_1 - x_2 = 10 - r$, which defines the boundary of $B_{-w^*_{1,1}}(r)$, and the equation $x_1 - x_2 = -10 + r$, which defines the boundary of $B_{-w^*_{2,1}}(r)$, meet.  Thus, the decision boundary $\mathcal{B}$ is defined as
    \[
   \mathcal{B} =  \left\{x \in \mathbb{R}^3/\mathbb{R}{\bf 1}\middle\vert x_1=x_2, \, x_3 = 0\right\}.
    \]
    \begin{figure}[h!]
        \centering
        \includegraphics[width=0.45\textwidth]{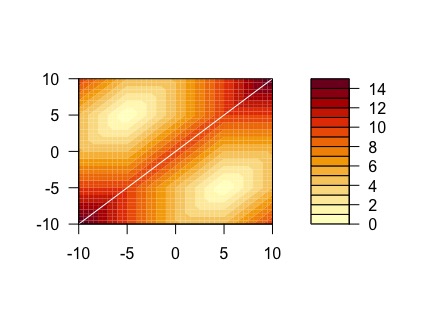}
        \includegraphics[width=0.45\textwidth]{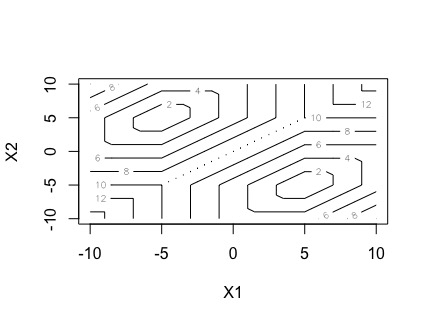}
        \caption{Here we have $w^*_{1,1} = (5, -5, 0), \,  w^*_{2,1} = (-5, 5, 0)$.  (LEFT) Heat-map plot for distance from optimal weights in Example \ref{eg:contour}.  The white line is the decision boundary. (RIGHT) Contour plot of Example \ref{eg:contour}.  As you can see tropical balls $B_{-w^*_{1,1}}(r) = B_{(5, -5)}(r)$ and $B_{-w^*_{2,1}}(r) = B_{(-5, 5)}(r)$ for $r > 0$.}
        \label{fig:contourplot}
    \end{figure}
\end{example}

\begin{example}\label{eg:contour2}
    We consider $\mathbb{R}^3/\mathbb{R}{\bf 1}$ and $C = 2$.  Suppose we have $k_1 = k_2 = 2$ and $w^*_{1,1} = (-2, -7, 0), \, w^*_{1,2} = (-8, -3, 0), \, w^*_{2,1} = (6, 1, 0), \, w^*_{2,2} = (0, 5, 0)$.  Then the contour plot and heat map plot are shown in Figure \ref{fig:contourplot2}.  Similarly to Example \ref{eg:contour}, we can compute the system of linear inequalities for each tropical ball.  Then the decision boundary for this example is a piece-wise linear line defined by 
    \[
     \mathcal{B} =  \left\{x \in \mathbb{R}^3/\mathbb{R}{\bf 1}\middle\vert 
     \begin{array}{rl}
        x_1+x_2 = 1 &\mbox{ if }x_1 \leq \frac{1}{2}, \, x_2 \geq \frac{1}{2},\\
        x_2 = \frac{1}{2}  &  \mbox{ if } \frac{1}{2}\leq x_1 \leq \frac{5}{2}, \\
        x_1+x_2 = 3 &\mbox{ if }x_1 \geq \frac{5}{2}, \, x_2 \leq \frac{1}{2}\\
     \end{array}
     \right\}.
    \]
    
    \begin{figure}[h!]
        \centering
        \includegraphics[width=0.45\textwidth]{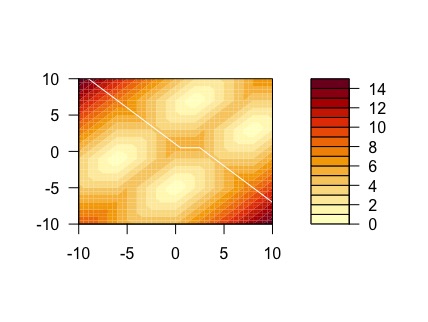}
        \includegraphics[width=0.45\textwidth]{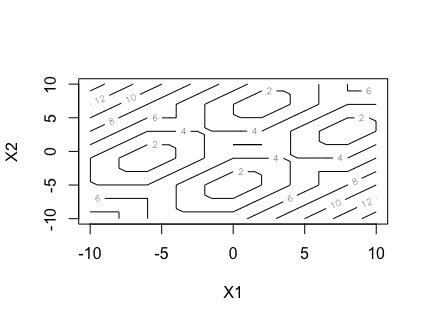}
        \caption{Here we have $w^*_{1,1} = (-2, -7, 0), \, w^*_{1,2} = (-8, -3, 0), \, w^*_{2,1} = (6, 1, 0), \, w^*_{2,2} = (0, 5, 0)$. (LEFT) Heat-map plot for distance from optimal weights in Example \ref{eg:contour}.  The white line is the decision boundary. (RIGHT) Contour plot of Example \ref{eg:contour2}.  As you can see tropical balls $B_{-w^*_{1,1}}(r) = B_{(2, 7)}(r), B_{-w^*_{1,2}}(r) = B_{(8, 3)}(r)$ and $B_{-w^*_{2,1}}(r) = B_{(-6, -1)}(r), B_{-w^*_{2,2}}(r) = B_{(0, -5)}(r)$ for $r > 0$.}
        \label{fig:contourplot2}
    \end{figure}
\end{example}

Suppose we have one neuron, i.e., $k_1 = k_2 = \ldots = k_C = 1$, for each class $c = 1, \ldots , C$ for the response variable such as Example \ref{eg:contour}.  Then we have the following lemma.
\begin{lemma}
    Let $w^*_{c, 1}$ be the optimal weight or a kernel for the neuron $z_c$ in the tropical embedding layer for $c = 1, \ldots , C$, and let $S := \{w^*_{1, 1}, \ldots , w^*_{C, 1} \}$.  Then the decision boundary of the tropical neural network is the tropical bisector $bis(S)$ defined in Definition \ref{def:bisector}. 
\end{lemma}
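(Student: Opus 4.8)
The plan is to derive the asserted equality $\mathcal{B} = bis(S)$ directly from the two relevant definitions; no geometry is needed beyond what is already in hand. First I would record what the network computes when $k_1 = \cdots = k_C = 1$: the tropical embedding layer then has exactly $C$ neurons $z_1, \ldots, z_C$, the optimal weight matrix $W^*$ has rows $w^*_{1,1}, \ldots, w^*_{C,1}$, and the classification rule $Y_i = j \iff \argmin_{w^* \in W^*} d_{\rm tr}(x_i, -w^*) \in \{w^*_{j,1}\}$ collapses to ordinary nearest-point classification, in the tropical metric, against the $C$ points $-w^*_{1,1}, \ldots, -w^*_{C,1}$, i.e.\ the Fermat--Weber points of the classes, which by \eqref{eq:embed} are the centers of the neurons $z_1, \ldots, z_C$. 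Accordingly, in applying Definition \ref{def:bisector} I take $S$ to be this set of (negated) weights, so that $c \mapsto -w^*_{c,1}$ is a bijection from the classes onto $S$.

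Next I would specialize the general formula for $\mathcal{B}$. Setting every $k_j = 1$ makes each index set $[k_j]$ a singleton, so every admissible term of $\mathcal{B}$ with $j \neq j'$ is just $d_{\rm tr}(x, -w^*_{j,1}) = d_{\rm tr}(x, -w^*_{j',1})$, whence
\[
\mathcal{B} = \bigl\{\, x \in \mathbb{R}^d/\mathbb{R}{\bf 1} : d_{\rm tr}(x, -w^*_{j,1}) = d_{\rm tr}(x, -w^*_{j',1}) \ \text{ for some } j \neq j' \,\bigr\}.
\]
On the other hand, Definition \ref{def:bisector} applied to $S$ gives
\[
bis(S) = \bigl\{\, x \in \mathbb{R}^d/\mathbb{R}{\bf 1} : d_{\rm tr}(x, a) = d_{\rm tr}(x, b) \ \text{ for some distinct } a, b \in S \,\bigr\} = \bigcup_{c \neq c'} bis(\{-w^*_{c,1}, -w^*_{c',1}\}).
\]
Because there is exactly one neuron per class, the bijection $c \mapsto -w^*_{c,1}$ turns the quantifier over distinct pairs $a, b \in S$ into the quantifier over $j \neq j'$; the two displayed conditions then coincide verbatim, so $\mathcal{B} = bis(S)$. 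I would also point out that this identity requires no (weak) general position hypothesis --- $bis(S)$ may perfectly well be full-dimensional --- so the conclusion is unconditional.

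I do not expect a genuine obstacle: the claim is a definitional unwinding, and in particular the polytrope structure of tropical balls (Theorem \ref{th:tropicalBall}) and the hyperplane descriptions invoked in the multi-neuron discussion play no role here. The only points demanding care are bookkeeping ones: (i) verifying that the within-class comparison terms of $\mathcal{B}$ --- those indexed by two elements of the same $[k_j]$ --- genuinely disappear, which holds precisely because $|[k_j]| = 1$, so that only between-class bisectors survive; and (ii) keeping the sign convention straight, so that the point $-w^*_{c,1}$ appearing in the classifier's distance $d_{\rm tr}(x, -w^*_{c,1})$ is matched with the corresponding element of $S$ in Definition \ref{def:bisector} (that is, reading $S$ as the set of Fermat--Weber points $-w^*_{c,1}$). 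Once those are pinned down, the proof is a one-line identification of quantifiers.
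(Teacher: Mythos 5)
Your proof is correct and is essentially the paper's own argument: the paper dismisses the lemma as ``trivial from Definition \ref{def:bisector},'' and your write-up is just that definitional unwinding carried out explicitly, with the right reading of the quantifiers. Your side remark about the sign convention is well taken --- for the identity to hold literally one must read $S$ as the set of negated weights $\{-w^*_{1,1},\ldots,-w^*_{C,1}\}$ (the points actually compared to $x$ in the classifier), a detail the paper's statement elides.
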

\begin{proof}
    This is trivial from Definition \ref{def:bisector}.
\end{proof}


\begin{theorem}[Proposition 4 in \cite{CJS}]\label{tm:twoneurons}
    Suppose we have $C \geq 2$, and $k_1 = \ldots = k_C = 1$.  
    If the optimal weights $w^*_{t, 1}:=(w^{*1}_{t, 1}, \ldots , w^{*d}_{t, 1}), \, w^*_{t', 1}:=(w^{*1}_{t', 1}, \ldots , w^{*d}_{t', 1}) \in \mathbb{R}^d/\mathbb{R}{\bf 1}$ for $t, t' \in \{1, \ldots , C\}$ and for each neuron (or kernel) in the tropical embedding layer are in weak general position, then the decision boundary of a tropical neural network is defined by the homogeneous max-tropical Laurent polynomial such that
    \begin{equation}\label{eq:maxlauren}
        \max \left(\max_{i, j \in \{1, \ldots , d\}} (x_i - w^{*i}_{t, 1} - x_j + w^{*j}_{t, 1} ), \max_{k, l \in \{1, \ldots , d\}} (x_k - w^{*k}_{t', 1} - x_l + w^{*l}_{t', 1} ) \right).
    \end{equation}
    In addition, the decision boundary is contained in a max-tropical hypersurface of degree $d$.
\end{theorem}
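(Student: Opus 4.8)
The plan is to reduce the statement to a single tropical bisector, rewrite that bisector as the solution set of a tropical Laurent polynomial equation, and then use weak general position to control its dimension. First I would apply the Lemma above: since $k_1 = \ldots = k_C = 1$, the boundary between classes $t$ and $t'$ is exactly the tropical bisector $bis(\{w^*_{t,1}, w^*_{t',1}\})$ of Definition~\ref{def:bisector}, i.e.\ the set of $x \in \mathbb{R}^d/\mathbb{R}{\bf 1}$ with $d_{\rm tr}(x, w^*_{t,1}) = d_{\rm tr}(x, w^*_{t',1})$ in the sign convention of \eqref{eq:maxlauren}. So everything reduces to analysing $bis(\{a,b\})$ with $a := w^*_{t,1}$, $b := w^*_{t',1}$.

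Next I would expand the tropical metric of Definition~\ref{def:tropicalMetric}: for fixed $a$,
\[
d_{\rm tr}(x, a) \;=\; \max_{i,j \in \{1,\ldots,d\}} \bigl(x_i - a_i - x_j + a_j\bigr) \;=:\; f_a(x),
\]
a max-plus tropical Laurent polynomial whose monomials are the $x_i \odot x_j^{-1}$ with coefficient $a_j - a_i$, together with the constant $0$ from $i=j$; define $f_b$ analogously and set $P := f_a \oplus f_b$, the polynomial in \eqref{eq:maxlauren}. Weak general position of $\{a,b\}$ says $a,b$ lie on no hyperplane parallel to a facet of a tropical ball; since a tropical ball is a polytrope (Theorem~\ref{th:tropicalBall}) whose facets are carried by hyperplanes of the form $\{x_i - x_j = \mathrm{const}\}$, this means $a_i - a_j \neq b_i - b_j$ for all $i\neq j$, i.e.\ no monomial of $f_a$ equals a monomial of $f_b$ apart from the shared constant. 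Hence at any $x$ on the bisector with $x\neq a$ the common value $f_a(x)=f_b(x)>0$ is attained at a non-constant monomial of the $f_a$-block and at a \emph{distinct} monomial of the $f_b$-block, so $bis(\{a,b\}) \subseteq V(P)$, the max-tropical hypersurface of $P$ (the excluded case $f_a(x)=0$ forces $x=a=b$). Finally, multiplying $P$ by the monomial $x_1 \odot \cdots \odot x_d$ leaves $V(P)$ unchanged and turns every monomial $x_i \odot x_j^{-1}$ into $x_i^2 \bigodot_{m\neq i,j} x_m$ and the constant into $x_1 \odot \cdots \odot x_d$, so $P$ becomes a genuine homogeneous tropical polynomial of degree $d$; this already proves the last assertion of the theorem.

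What remains is to justify that $P$ actually \emph{defines} the boundary, i.e.\ that $bis(\{a,b\})$ is precisely the codimension-one sublocus of $V(P)$ along which the active monomial of $P$ jumps from the $f_a$-block to the $f_b$-block, and in particular has empty interior. The argument I would give: refine $\mathbb{R}^d/\mathbb{R}{\bf 1}$ by the common refinement of the polyhedral decompositions into domains of linearity of $f_a$ and of $f_b$; on each top-dimensional cell, $f_a$ and $f_b$ each agree with a single affine-linear function $M_a,M_b$, and weak general position forces $M_a - M_b$ to be either a nonzero constant --- so $\{M_a=M_b\}$ is empty on that cell --- or to have nonzero linear part --- so $\{M_a=M_b\}$ is a codimension-one slice. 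Patching over all cells shows $bis(\{a,b\})$ is pure of codimension one and equals the claimed sublocus of $V(P)$. (Conversely, if weak general position fails, a monomial shared by $f_a$ and $f_b$ can be active on a full-dimensional region, producing a ``fat'' bisector contained in no tropical hypersurface, which is why the hypothesis is needed.) I expect the bookkeeping on the \emph{lower}-dimensional strata of the refinement --- verifying that the various ties there do not enlarge the bisector beyond codimension one, and that the resulting piecewise-linear complex is exactly the hypersurface of the degree-$d$ polynomial $P$ --- to be the main obstacle; this is the content of Proposition~4 of \cite{CJS}, whose proof I would follow.
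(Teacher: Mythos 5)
The paper gives no proof of this statement---it is imported wholesale as Proposition~4 of \cite{CJS}---so there is no internal argument to compare against; judged on its own, your reconstruction is essentially correct and follows the route one would expect from \cite{CJS}. The two load-bearing steps both check out: (i) writing $d_{\rm tr}(x,a)=\max_{i,j}(x_i-a_i-x_j+a_j)=:f_a(x)$ exhibits each distance as a max-tropical Laurent polynomial, and at a bisector point the maximum of $P=f_a\oplus f_b$ is attained both by a maximizer of $f_a$ and by a maximizer of $f_b$; weak general position (equivalently $a_i-a_j\neq b_i-b_j$ for all $i\neq j$, since every facet of a tropical ball lies in a hyperplane $x_i-x_j=\mathrm{const}$) forces these to be distinct exponent vectors, giving $bis(\{a,b\})\subseteq V(P)$; (ii) tropical multiplication by $x_1\odot\cdots\odot x_d$ homogenizes $P$ to degree $d$ without changing $V(P)$. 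Two remarks. First, the theorem cannot mean equality $bis(\{a,b\})=V(P)$: $V(P)$ also contains points where two monomials of the \emph{same} block tie and dominate the other block (e.g.\ points near $b$ on the corner locus of $f_a$), so ``defined by'' must be read as containment, which is exactly what you prove; your sharper description of the bisector as the locus where $\max P$ is attained in both blocks is correct and needs no cell-by-cell bookkeeping, since a term of the $f_a$-block attaining $\max P$ already forces $P(x)=f_a(x)$ and symmetrically. Second, the step you defer to \cite{CJS}---purity of codimension one of the bisector---is not actually asserted in the statement as reproduced here (that is the role of Theorems~\ref{tm:decDim} and~\ref{tm:decDim2} via Proposition~2 of \cite{CJS}), so your argument is complete for what is claimed. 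You are also right that the sign mismatch between the decision boundary (distances to $-w^*$) and formula \eqref{eq:maxlauren} (distances to $+w^*$) is an inconsistency of the paper, not of your proof.
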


Note that if $k_1=k_2 = 1$ and if $w^*_{1,1}$ and $w^*_{2,1}$ are in weak general position, then we can apply Theorem \ref{tm:twoneurons} to compute the decision boundary of a tropical neural network.  
First, we enumerate the maximal cells of the tropical hypersurface defined by the equation in \eqref{eq:maxlauren} where one of 
\[
x_i - w^{*i}_{1, 1} - x_j + w^{*j}_{1, 1}  \mbox{ for }i, j \in \{1, \ldots , d\}
\]
and one of 
\[
x_k - w^{*k}_{2, 1} - x_j + w^{*l}_{2, 1} \mbox{ for }k, l \in \{1, \ldots , d\}
\]
attain maxima. The time complexity of this algorithm is $\Omega(d^4)$ which is tight by Corollary 8 in \cite{CJS}.

\begin{theorem}\label{tm:decDim}
    Suppose we have $C = 2$ and $k_1, k_2 \geq 1$.  Then the decision boundary of the tropical neural network defined by $w^*_{1, 1}, \ldots w^*_{1, k_1}$ and $w^*_{2, 1}, \ldots w^*_{2, k_2}$ does not contain full-dimensional cells if and only if any pair of $w^*_{1, i}$ and $w^*_{2, j}$ for $i \in \{1, \ldots , k_1\}$ and $j \in \{1, \ldots , k_2\}$ is in weak general position.
\end{theorem}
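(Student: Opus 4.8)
The plan is to reduce the statement to a fact about the bisector of a single pair of points, one from each class, and then pass to a finite union. Since $C=2$, the decision boundary unpacks directly from its definition as
$\mathcal{B}=\bigcup_{i\in[k_1],\,j\in[k_2]} bis(\{-w^*_{1,i},-w^*_{2,j}\})$,
a finite union of bisectors of pairs of points. Each $bis(\{-w^*_{1,i},-w^*_{2,j}\})$ is a finite union of polyhedra, and the topological dimension of a finite union of polyhedra is the maximum of the dimensions of the pieces; hence $\mathcal{B}$ contains a full-dimensional cell (i.e.\ a cell of top dimension $d-1$ in $\mathbb{R}^d/\mathbb{R}\mathbf{1}\cong\mathbb{R}^{d-1}$, equivalently $\mathcal{B}$ has nonempty interior) if and only if at least one of the bisectors $bis(\{-w^*_{1,i},-w^*_{2,j}\})$ does. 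So it suffices to show: for $a,b\in\mathbb{R}^d/\mathbb{R}\mathbf{1}$, the bisector $bis(\{a,b\})$ contains a full-dimensional cell if and only if $a,b$ are not in weak general position with respect to a tropical ball. One then applies this with $a=-w^*_{1,i}$ and $b=-w^*_{2,j}$, using that $w^*_{1,i},w^*_{2,j}$ are in weak general position exactly when $-w^*_{1,i},-w^*_{2,j}$ are, since negation does not change which coordinates of the difference vector coincide.

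For the forward direction of this equivalence I would quote Theorem \ref{tm:twoneurons}: if $a,b$ are in weak general position then, viewing them as the (negated) weights of a two-neuron tropical network with $C=2$ and $k_1=k_2=1$, its decision boundary — which is precisely $bis(\{a,b\})$ — lies inside a max-tropical hypersurface of degree $d$. A tropical hypersurface in $\mathbb{R}^{d-1}$ is a polyhedral complex of pure codimension one, so it has no $(d-1)$-dimensional cell, and hence neither does $bis(\{a,b\})$. Summing over the finitely many cross-class pairs: if every $(w^*_{1,i},w^*_{2,j})$ is in weak general position, then $\dim\mathcal{B}\le d-2$ and $\mathcal{B}$ contains no full-dimensional cell.

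For the reverse direction I would argue contrapositively and exhibit an explicit full-dimensional cell. Suppose $(w^*_{1,i_0},w^*_{2,j_0})$ is not in weak general position; put $a=-w^*_{1,i_0}$, $b=-w^*_{2,j_0}$, so that $bis(\{a,b\})\subseteq\mathcal{B}$. Translating by $-b$, which preserves $d_{\rm tr}$ and merely shifts the bisector, I may assume $b=\mathbf{0}$, so $a=a-b$ has two coordinates that coincide, say $a_p=a_q$ with $p\ne q$ (if all coordinates of $a-b$ coincide then $a=b$ and $bis(\{a,b\})$ is everything). Consider
$R=\{x:\ x_p\ge x_l\ \text{and}\ x_p-a_p\ge x_l-a_l\ \text{for all }l;\ \ x_q\le x_l\ \text{and}\ x_q-a_q\le x_l-a_l\ \text{for all }l\}$.
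This region is full-dimensional: make $x_p$ large, $x_q$ very negative, and the remaining coordinates moderate, with strict slack in every inequality. On $R$ the index $p$ realizes the maximum of both $x$ and $x-a$ and the index $q$ realizes both minima, so $d_{\rm tr}(x,\mathbf{0})=x_p-x_q$ and $d_{\rm tr}(x,a)=(x_p-a_p)-(x_q-a_q)=x_p-x_q-(a_p-a_q)=x_p-x_q$ using $a_p=a_q$. Hence $d_{\rm tr}(x,\mathbf{0})=d_{\rm tr}(x,a)$ on all of $R$, so $R\subseteq bis(\{a,b\})\subseteq\mathcal{B}$ is a full-dimensional cell of $\mathcal{B}$.

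The proof is short once the reduction is set up; the step carrying the content is the reverse implication, and the point to be careful about is choosing $R$ so that the two piecewise-linear distance functions collapse to the \emph{same} affine form — which is exactly what $a_p=a_q$ buys, and taking $p,q$ to be the two equal coordinates of $a-b$ is the crux. The rest is routine: the description of $\mathcal{B}$ as a union of cross-class bisectors, the negation-invariance of weak general position, and the codimension-one-ness of tropical hypersurfaces used in the forward direction. If \cite{CJS} already records the dichotomy that a two-point tropical bisector is full-dimensional precisely when the two points fail to be in weak general position, that citation subsumes the explicit construction and the argument reduces to bookkeeping around the finite union.
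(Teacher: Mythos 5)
Your proposal is correct, but it takes a genuinely different route from the paper: the paper's entire proof is a one-line citation of Proposition~2 in \cite{CJS} (which records exactly the dichotomy you isolate, that a two-point tropical bisector is full-dimensional precisely when the pair fails to be in weak general position), whereas you reprove that dichotomy from scratch and then handle the union. Your reduction of $\mathcal{B}$ to the finite union of cross-class bisectors, the dimension-of-a-union observation, and the negation-invariance of weak general position are all sound bookkeeping; your forward direction correctly leverages Theorem~\ref{tm:twoneurons} (itself Proposition~4 of \cite{CJS}) plus the standard fact that a tropical hypersurface of a polynomial with distinct monomials is pure codimension one; and your reverse direction is the real added value --- the explicit region $R$ on which both distance functions collapse to the same affine form $x_p - x_q$ because $a_p = a_q$ is a clean, verifiable construction (the sample point $x_p = M$, $x_q = -M$, other coordinates $0$ does satisfy all inequalities strictly for large $M$, so $R$ is indeed full-dimensional in $\mathbb{R}^d/\mathbb{R}\mathbf{1}$). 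What the paper's approach buys is brevity; what yours buys is self-containedness and an explicit witness for the full-dimensional cell. One caveat worth recording: your identification $\mathcal{B}=\bigcup_{i,j} bis(\{-w^*_{1,i},-w^*_{2,j}\})$ matches the paper's displayed formula for $\mathcal{B}$, but if one instead takes the decision boundary in the classifier sense (points where the \emph{minimum} distance is tied between the two classes), then $\mathcal{B}$ is only a subset of that union and your reverse direction would need the extra argument that some full-dimensional portion of $R$ survives the competition from the other weights $-w^*_{1,i}$, $-w^*_{2,j}$; as stated, the theorem (and your proof) should be read against the union-of-bisectors definition.
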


\begin{proof}
    This is trivial from Proposition 2 in \cite{CJS}.
\end{proof}

Similarly we have the following theorem for $C \geq 2$ and $k_1 = \ldots = k_C = 1$.
\begin{theorem}\label{tm:decDim2}
    Suppose we have $C \geq 2$ and $k_1= \ldots = k_C = 1$.  Then the decision boundary of the tropical neural network defined by $w^*_{1, 1}, \ldots w^*_{C, 1}$ does not contain full-dimensional cells if and only if any pair of $w^*_{i, 1}$ and $w^*_{j, 1}$ for $i, j \in \{1, \ldots , C\}$ is in weak general position.
\end{theorem}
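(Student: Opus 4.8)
The plan is to reduce everything to the two-class, one-neuron case, which is already settled by Theorem~\ref{tm:decDim} (equivalently, by Proposition~2 in~\cite{CJS}), and then to control $\mathcal{B}$ by a finite-union dimension count. First I would rewrite $\mathcal{B}$ explicitly in this regime: since $k_1=\dots=k_C=1$, the indices $l_j,l_{j'}$ in the displayed formula for $\mathcal{B}$ are forced to be $1$, so
\[
\mathcal{B}=\bigcup_{1\le i<j\le C} bis_{ij},\qquad bis_{ij}:=\left\{x\in\mathbb{R}^d/\mathbb{R}{\bf 1}\mid d_{\rm tr}(x,-w^*_{i,1})=d_{\rm tr}(x,-w^*_{j,1})\right\}.
\]
Each $bis_{ij}$ is the tropical bisector of $\{-w^*_{i,1},-w^*_{j,1}\}$; since $x\mapsto d_{\rm tr}(x,-w^*_{i,1})-d_{\rm tr}(x,-w^*_{j,1})$ is piecewise linear on $\mathbb{R}^d/\mathbb{R}{\bf 1}\cong\mathbb{R}^{d-1}$, its zero set $bis_{ij}$ is a finite polyhedral complex. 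The Lemma identifying the two-class, single-neuron decision boundary with $bis(S)$, together with Theorem~\ref{tm:decDim} specialized to $k_1=k_2=1$, then supplies the pairwise fact I need: $bis_{ij}$ contains a full-dimensional cell if and only if $w^*_{i,1}$ and $w^*_{j,1}$ are \emph{not} in weak general position.

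With this in hand I would prove the two implications separately. For the ``if'' direction, assume every pair $w^*_{i,1},w^*_{j,1}$ is in weak general position; then each $bis_{ij}$ has dimension at most $d-2$, and since the dimension of a finite union of polyhedral complexes is the maximum of their dimensions, $\dim\mathcal{B}\le d-2$, so $\mathcal{B}$ contains no full-dimensional cell. For the ``only if'' direction I would argue by contraposition: if some pair $w^*_{i,1},w^*_{j,1}$ fails to be in weak general position, the pairwise fact gives a full-dimensional cell $P\subseteq bis_{ij}$, and $P\subseteq bis_{ij}\subseteq\mathcal{B}$, so $\mathcal{B}$ contains a full-dimensional cell.

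The only genuinely delicate point — and it is minor — is the bookkeeping in the ``if'' direction: one must make sure ``contains no full-dimensional cell'' is read as ``has dimension at most $d-2$ as a polyhedral complex in $\mathbb{R}^{d-1}$'', and that this property is inherited by finite unions. Once the $bis_{ij}$ are exhibited as finite polyhedral complexes, this is routine, and the real content of the theorem is precisely the pairwise statement of~\cite{CJS} applied once to each cross-class pair — which is why, exactly as for Theorem~\ref{tm:decDim}, the result follows essentially immediately from Proposition~2 of~\cite{CJS}.
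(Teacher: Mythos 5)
Your proposal is correct and follows essentially the same route as the paper, which disposes of this theorem by citing Proposition~2 of~\cite{CJS}; you have simply made explicit the reduction that the paper leaves implicit, namely writing $\mathcal{B}$ as the finite union of cross-class pairwise bisectors, applying the two-point criterion to each, and observing that a finite union of polyhedral complexes of dimension at most $d-2$ still has dimension at most $d-2$. The added bookkeeping is sound and consistent with the paper's displayed definition of $\mathcal{B}$ as a union of bisectors, so no further work is needed.
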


\begin{proof}
    This is trivial from Proposition 2 in \cite{CJS}.
\end{proof}

\begin{example}
   Consider $w^*_{1,1}=(5, -5, 0)$ and $w^*_{2, 1} = (-5, 5, 0)$ from Example \ref{eg:contour2}.  Here $w^*_{1, 1}$ and $w^*_{2, 1}$ are in weak general position.  Thus, by Theorem \ref{tm:decDim}, the decision boundary has  dimension less than $d-1 = 2$.  In this case it has dimension $1$.
\end{example}

\begin{example}\label{eg:contourNonFull}
   Suppose we have $C = 2$ and $k_1= k_2 = 1$.  Consider $w^*_{1,1}=(5, 5, 0)$ and $w^*_{2, 1} = (-7, -7, 0)$ which are not in weak general position since they are on the hyperplane $x_1 = x_2$.  Figure \ref{fig:contourplotNonFull} shows contour plots of two tropical Laplacian distributions.  In this case $w^*_{1,1}$ and $w^*_{2,1}$ are not in weak general position and the decision boundary has full dimensional, i.e., dimension $2$, region(s) by Theorem \ref{tm:decDim}.  In this case we have the region
   \[
   \mathcal{B} =  \left\{x \in \mathbb{R}^3/\mathbb{R}{\bf 1}\middle\vert 
   \begin{array}{rcl}
        x_1 & \leq & -7   \\
       x_2 & \geq & 5\\
        x_1 + x_2 &=& -2\\
        x_1 & \geq & 5   \\
        x_2 & \leq & -7\\
   \end{array}
   \right\} .
   \]
       \begin{figure}[h!]
       \centering
       \includegraphics[width=0.45\textwidth]{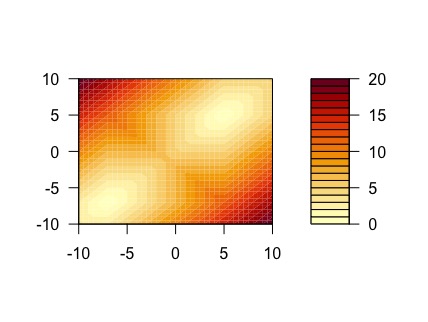}
        \includegraphics[width=0.45\textwidth]{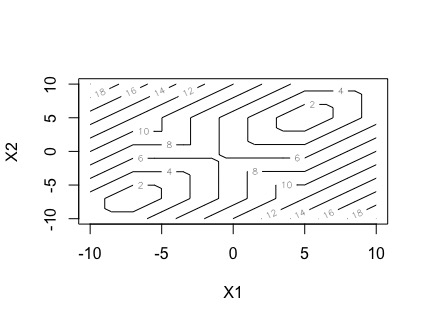}
       \caption{Here we have $w^*_{1,1} = (5, 5, 0), \, w^*_{2,1} = (-7, -7, 0)$. (LEFT) Heat-map plot for distance from optimal weights in Example \ref{eg:contourNonFull}.  (RIGHT) Contour plot of Example \ref{eg:contourNonFull}.  }
        \label{fig:contourplotNonFull}
   \end{figure}
\end{example}

It is possible to classify all the possible configurations of two points in the planar case or in $\mathbb{R}^3/\mathbb{R}{\bf 1}$.
By scaling and parallel translation, you can set the coordinates of the two points to $w^*_{1, 1}=(0, 0, 0)$ and $w^*_{2, 1} = (1, w, 0)$ without loss of generality. The configurations are classified exhaustively as in the following lemma with Figure \ref{fig:bisector_stability}.

\begin{figure}[h!]
    \centering
    \includegraphics[width=1\textwidth]{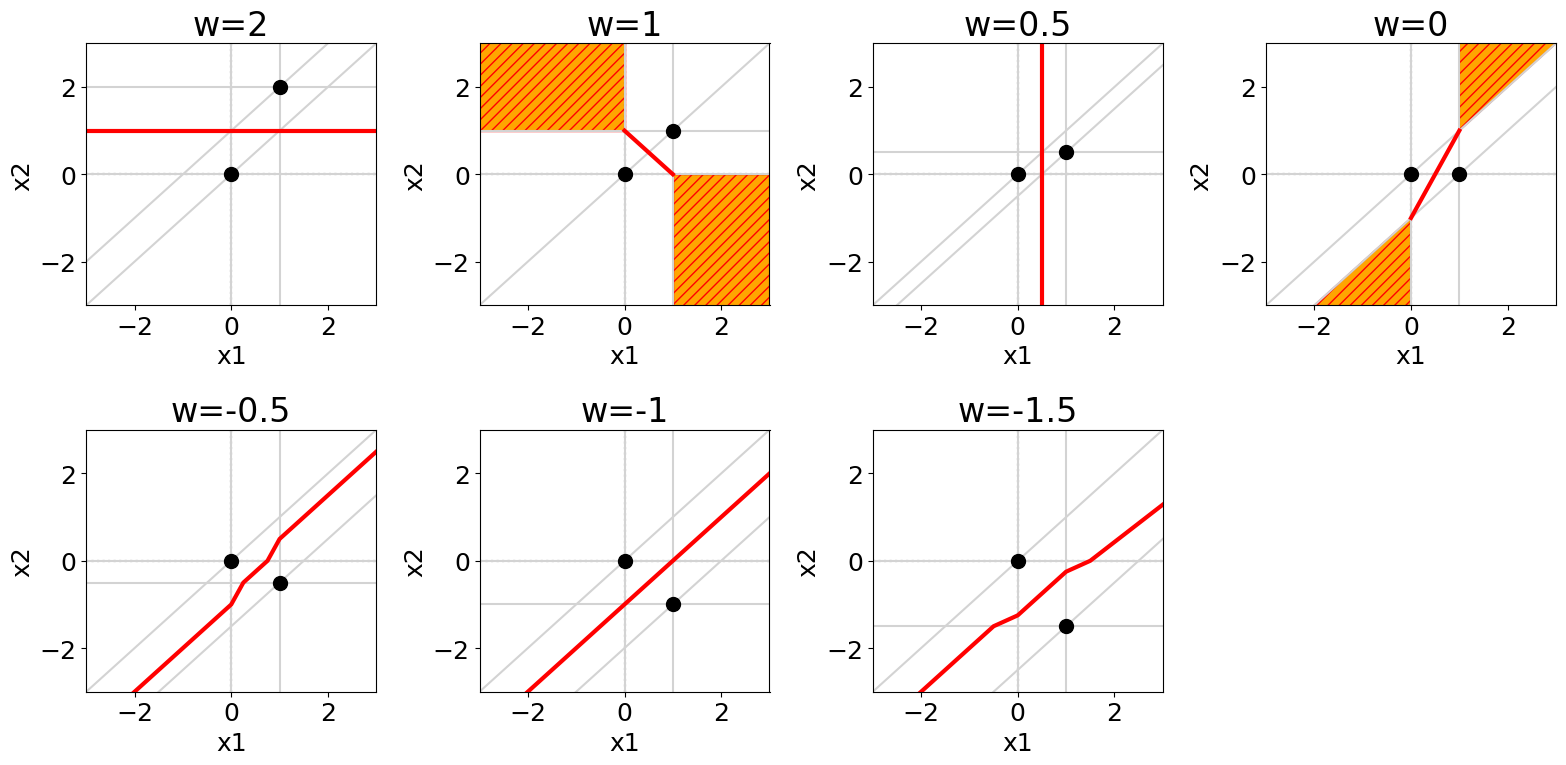}
    \caption{The bisectors between $w^*_{1, 1}=(0, 0, 0)$ and $w^*_{2, 1} = (1, w, 0)$ for various $w$ are represented by red. The lightgray lines represent the hyperplanes for $w^*_{1, 1}$ and $w^*_{2, 1}$. Note that $w=1$ and $w=0$ are not in weakly general positions (i. e. $w^*_{2, 1} - w^*_{1, 1}$ is parallel to a facet of a tropical unit ball) and, therefore, not in general positions (i. e. small perturbations change which sectors the bisector is in). $w=-1$ is in weakly general positions but not in general positions.}
    \label{fig:bisector_stability}
\end{figure}

\begin{lemma}
\label{lemma:bisector_classification}
Suppose we have a binary response variable, i.e., $C = 2$, and $k_1 = k_2 = 1$.
Let $w^*_{1, 1}=(0, 0, 0)$ and $w^*_{2, 1} = (1, w, 0)$ in $\mathbb{R}^3/\mathbb{R}{\bf 1}$.
Then, for $1 < w$, the bisector representing the decision boundary for the two points is $x_2=w/2$.
For $0 < w < 1$, it is $x_1=1/2$.
For $-1 < w < 0$, it is
\[
x_2 =
\left\{
\begin{array}{ll}
x_1 -1 & (x_1 < 0) \\
2 x_1 -1 & (0< x_1 < 1/2+w/2) \\
x_1 - 1/2 + w/2 & (1/2+w/2 < x_1 < 1/2-w/2) \\
2 x_1 -1+w & (1/2-w/2 < x_1 <1) \\
x_1 + w & (1 < x_1)
\end{array}
\right.
\]
For $w = -1$, it is $x_2 = x_1 - 1$.
(Note that it is low dimensional and this straight line can also be obtained by setting $w=-1$ in the above and below equations for $-1 < w < 0$ and $w < -1$.
This contrasts with the full-dimensional bisectors for $w=1$ and $w=0$ as in Figure \ref{fig:bisector_stability}.)
For $w < -1$, it is
\[
x_2 =
\left\{
\begin{array}{ll}
x_1 -1 & (x_1 < -1-w) \\
1/2 x_1 + w/2-1/2 & (-1-w < x_1 < 0) \\
x_1 + w/2 - 1/2 & (0 < x_1 < 1) \\
1/2 x_1 + w/2 & (1 < x_1 < -w) \\
x_1 + w & (-w < x_1)
\end{array}
\right.
\]
\end{lemma}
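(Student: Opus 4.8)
The plan is to compute the tropical bisector directly from the definition, exploiting the isomorphism $\mathbb{R}^3/\mathbb{R}{\bf 1} \cong \mathbb{R}^2$ to work with the two coordinates $(x_1, x_2)$ after normalizing $x_3 = 0$. Since $w^*_{1,1} = (0,0,0)$, the tropical distance $d_{\rm tr}(x, w^*_{1,1})$ is simply $\max\{x_1, x_2, 0\} - \min\{x_1, x_2, 0\}$. Likewise $d_{\rm tr}(x, w^*_{2,1}) = \max\{x_1 - 1, x_2 - w, 0\} - \min\{x_1 - 1, x_2 - w, 0\}$. The decision boundary is the locus where these two expressions are equal. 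Both functions are piecewise linear, with linearity regions cut out by the hyperplanes where two of the three arguments of a $\max$ or $\min$ coincide: for the first point these are $x_1 = x_2$, $x_1 = 0$, $x_2 = 0$; for the second they are $x_1 - 1 = x_2 - w$, $x_1 = 1$, $x_2 = w$ (these are precisely the ``lightgray'' hyperplanes in Figure \ref{fig:bisector_stability}). So the first step is to enumerate the common refinement of these two line arrangements into linear-region cells, which depends on the sign pattern of $w$ relative to the thresholds $0$, $1$, and $-1$; this is exactly why the statement splits into the five cases $w > 1$, $0 < w < 1$, $-1 < w < 0$, $w = -1$, and $w < -1$.

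Within each cell, both $d_{\rm tr}(x, w^*_{1,1})$ and $d_{\rm tr}(x, w^*_{2,1})$ are affine functions of $(x_1, x_2)$, so setting them equal gives a single affine equation; intersecting that hyperplane with the closure of the cell yields the corresponding piece of the bisector. For $w > 1$ and $0 < w < 1$ one checks that the active linear pieces make the equation reduce to $x_2 = w/2$ or $x_1 = 1/2$ respectively on a full-dimensional chunk — the degenerate/full-dimensional bisector phenomenon flagged after Theorem \ref{tm:decDim}, arising because $w^*_{2,1} - w^*_{1,1} = (1, w, 0)$ is then parallel to a facet of the tropical unit ball. For $-1 < w < 0$ and $w < -1$ the bisector is one-dimensional and polygonal, so one walks through the cells in order of increasing $x_1$, records the affine equation in each, and verifies continuity at the breakpoints $x_1 = 0$, $x_1 = 1/2 \pm w/2$, $x_1 = 1$ (resp.\ $x_1 = -1-w$, $0$, $1$, $-w$); the $w = -1$ case is then obtained either by direct computation or, as the lemma notes, by taking the limit of the formulas from the adjacent ranges and observing they collapse to the single line $x_2 = x_1 - 1$.

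The main obstacle is bookkeeping rather than conceptual: one has to correctly identify, in each of the many cells across the five regimes, which argument attains the max and which the min for \emph{each} of the two points simultaneously, and to be careful at the boundaries between cells where ties occur (the ``weak general position'' hypothesis is what guarantees no pair of these hyperplanes coincides except in the listed degenerate cases). A clean way to organize this is to note that $d_{\rm tr}(x, 0) - d_{\rm tr}(x, w^*_{2,1})$ is itself a piecewise-linear function whose zero set we want; by symmetry considerations (the configuration is, up to an affine map, invariant under swapping the two points combined with an appropriate coordinate change) one can cut the casework roughly in half. I would present the $-1 < w < 0$ case in full detail as the representative computation, note that $w < -1$ is handled identically, and dispatch the three remaining cases ($w > 1$, $0 < w < 1$, $w = -1$) with brief remarks, since $w > 1$ and $0 < w < 1$ already appear implicitly in Example \ref{eg:contour} and the surrounding discussion of full-dimensional decision boundaries.
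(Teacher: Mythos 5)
Your proposal is correct and is essentially the paper's own argument: the paper's entire proof is ``Direct calculations,'' and your cell-by-cell case analysis over the arrangement of sector boundaries of the two tropical hyperplanes is exactly the computation that phrase elides (I spot-checked several pieces of the $-1<w<0$ formula against your setup and they agree). Your write-up is in fact more informative than the paper's, since it makes explicit why the case split occurs at $w\in\{1,0,-1\}$ and connects the full-dimensional degenerations to weak general position.
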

\begin{proof}
Direct calculations.
\end{proof}




In the above example, $w=1$ and $w=0$ are not in weakly general positions (i. e. $w^*_{2, 1} - w^*_{1, 1}$ is parallel to a facet of a unit ball) and, therefore, not in general positions (i. e. small perturbations change which sectors the bisector is in). $w=-1$ is in weakly general position but not in general position.
As shown before, not being in a weakly general position  results in full-dimensional bisectors. In fact, perturbing $w$ from $1$ or $0$ drastically changes the positions of bisectors or which sectors the bisector exists.

Intuitively, this drastic change is required to connect the horizontal bisector for $w>1$ and the vertical bisector for $0<w<1$ via the full-dimensional bisector for $w=1$.
Similarly, the vertical bisector for $0<w<1$ and the diagonal bisector for $-1<w<0$ are connected via the ful-dimensional bisector for $w=0$. Thus, the structural stability follows from being or not being in general positions.
Not being in a general position, by definition, leads to the sensitivity of the bisector positions to small perturbations.

Interestingly, $w=-1$ is in a weakly general position but not in a general position. Perturbing $w$ from $-1$ somehow changes the positions of bisectors or which sectors the bisector is in, but the vulnerability is rather mild. The positions of the bisector do switch but the overall shape of the bisector is more or less similar.
In fact, it consists of five line segments when $w$ is close to $-1$ while it is just a single straight line when $w=-1$.

In the example shown in Figure~\ref{fig:bisector_stability} and Lemma~\ref{lemma:bisector_classification}, it may be illuminating to illustrate with some simplification how the sector boundary is modified during transfer learning from the viewpoint of Fisher information.

\begin{lemma}
\label{lemma:fisher_info}
Suppose that the three-dimensional feature $x$ is embedded into the last layer as $(d_{\rm tr}(x, w_1), d_{\rm tr}(x, w_2))$, where we only learn a single parameter $w$ in the weights $w_1=(0, 0, 0)$ and $w_2=(1, w, 0)$, whose true value is $w^*=2$.
Furthermore, suppose that the output of the neural network, that represents the probability for $y=1$ as a softmax, is given by $\hat{y} = \frac{1}{1 + e^{d_{\rm tr}(x, w_2) - d_{\rm tr}(x, w_1)}}$ and there are only two possibilities for explaining variables:
$x=(0, -1, 0)$ for $y \simeq 0$ and $x=(1, 3, 0)$ for $y \simeq 1$.
Then the Fisher information for the log-likelihood function (= negative loss),
$l = y \log \hat{y} + (1-y)\log (1-\hat{y})$, is $I=0.1$.
That is, $\textrm{Var}[\hat{w}_{MLE}] \simeq \frac{1}{n I} =\frac{10}{n}$ where $n$ is the number of observations.
\end{lemma}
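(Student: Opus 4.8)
The plan is to recognize the model as an ordinary one-parameter logistic regression whose linear predictor is the tropical ``log-odds''
\[
\eta(x,w) := d_{\rm tr}(x, w_1) - d_{\rm tr}(x, w_2),
\]
so that $\hat{y} = \sigma(\eta)$ with $\sigma(t) = 1/(1+e^{-t})$, and then to evaluate the two tropical distances explicitly near the true parameter $w^* = 2$. First I would record the standard logistic identities: for $l = y\log\hat{y} + (1-y)\log(1-\hat{y})$ one has $\partial l/\partial\eta = y - \hat{y}$, and for $y\mid x \sim \mathrm{Bernoulli}(\hat{y}(x,w^*))$, $\mathbb{E}[(\partial l/\partial\eta)^2\mid x] = \hat{y}(x,w^*)(1-\hat{y}(x,w^*))$. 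By the chain rule $\partial l/\partial w = (y-\hat{y})\,\partial\eta/\partial w$, so, taking one observation to be a draw of $x$ uniformly from the two admissible feature vectors followed by $y\mid x$ from the model, the Fisher information per observation is
\[
I(w^*) = \frac12 \sum_{x} \left(\frac{\partial\eta}{\partial w}(x,w^*)\right)^{2} \hat{y}(x,w^*)\bigl(1-\hat{y}(x,w^*)\bigr).
\]

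The second step is the tropical computation. Since $w_1 = (0,0,0)$ is fixed, $d_{\rm tr}((0,-1,0), w_1) = 1$ and $d_{\rm tr}((1,3,0), w_1) = 3$. For $w_2 = (1,w,0)$ and $w$ near $2$: the vector $(0,-1,0) - w_2 = (-1,\,-1-w,\,0)$ has maximum $0$ and minimum $-1-w$, so $d_{\rm tr}((0,-1,0), w_2) = 1+w$; and $(1,3,0) - w_2 = (0,\,3-w,\,0)$ has maximum $3-w$ and minimum $0$, so $d_{\rm tr}((1,3,0), w_2) = 3-w$. Hence on a neighbourhood of $w = 2$ we have $\eta((0,-1,0),w) = -w$ and $\eta((1,3,0),w) = w$; these are affine there (we sit in the interior of a linearity region of each piecewise-linear distance), so $\partial\eta/\partial w$ is well defined and equals $-1$ and $+1$ respectively.

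The third step is to substitute $w^* = 2$: then $\eta = \mp 2$ at the two points, so $\hat{y}(1-\hat{y}) = \sigma(2)\sigma(-2) = e^2/(1+e^2)^2$ at each of them, while $(\partial\eta/\partial w)^2 = 1$ at each, and therefore
\[
I(w^*) = \frac12\cdot\frac{e^2}{(1+e^2)^2} + \frac12\cdot\frac{e^2}{(1+e^2)^2} = \frac{e^2}{(1+e^2)^2} \approx 0.105 \approx 0.1 .
\]
The asymptotic Cram\'er--Rao / MLE variance statement $\mathrm{Var}[\hat{w}_{MLE}] \approx \tfrac{1}{nI} \approx \tfrac{10}{n}$ then follows immediately.

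I expect the only genuine obstacle to be the second step: correctly identifying which coordinates realize the maximum and minimum in each $d_{\rm tr}$ for $w$ in a neighbourhood of $2$, so that each $\eta(x,\cdot)$ is affine there and differentiable at $w^*$, and being explicit that $w^* = 2$ is not a breakpoint of either piecewise-linear distance. Once the two distances are pinned down, the remainder is the textbook logistic-regression Fisher-information calculation together with the numerical evaluation of $e^2/(1+e^2)^2$.
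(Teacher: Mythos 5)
Your proposal is correct and follows essentially the same route as the paper's proof: compute the two tropical distances explicitly near $w^*=2$ to get a linear predictor equal to $\pm w$ (so its $w$-derivative squared is $1$), and then reduce the Fisher information to the standard logistic quantity $\hat y(1-\hat y)=e^2/(1+e^2)^2\approx 0.105\approx 0.1$, identical at both feature points by symmetry. Your write-up is if anything slightly more careful than the paper's, since you explicitly check which coordinates attain the max/min and that $w^*=2$ lies in the interior of a linearity region of each piecewise-linear distance.
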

\begin{proof}
By direct calculation under $w^*=2$, we have
$\Delta := d_{\rm tr}(x, w_2) - d_{\rm tr}(x, w_1) 
= w \simeq 2$ for $x=(0, -1, 0)$ and $\Delta = -w \simeq -2$ for $x=(1, 3, 0)$.
Thus, $\frac{\partial \Delta}{\partial w} = \pm 1$.
Then,
$I := \textrm{E}[(\frac{\partial l}{\partial \hat{y}} \frac{\partial \hat{y}}{\partial \Delta} \frac{\partial \Delta}{\partial w})^2]
= \textrm{E}[\{ \frac{y}{\hat{y}} - \frac{1-y}{1-\hat{y}} \}^2 \{\hat{y}(1-\hat{y})\}^2 ]
= \textrm{E}[(y-\hat{y})^2]
= p(y=1)(1-\hat{y})^2 + p(y=0) (-\hat{y})^2
= \hat{y}(1-\hat{y})^2 + (1-\hat{y}) (-\hat{y})^2
= \hat{y}(1-\hat{y}) = 0.1$.
The final numerical value is common for $x=(0,-1,0)$ and $x = (1,3,0)$ due to their symmetry in position.
\end{proof}

\begin{remark}
In this example, we considered a two-point distribution for $x$.
However, the value of Fisher information does not necessarily strongly depend on the distribution of $x$.
For example, we can consider a single-point distribution (= delta function) in which $x$ is always $(0.5, 1, 0)$.
As $\hat{y}=0.5$ there, we have $I=0.25$, which is obviously the maximum of $I = \hat{y}(1-\hat{y})$.
\end{remark}

In general if we have all observations in the sample $\mathcal{S} \subset \mathbb{R}^d/\mathbb{R}{\bf 1}$ are all in the weak general position, then we have the following theorem from \cite{Criado_2021}.  
\begin{definition}
    Suppose we have a set $S \subset \mathbb{R}^d/\mathbb{R}{\bf 1} \cong \mathbb{R}^{d-1}$. $S$ is star convex with center $x_0 \in \mathbb{R}^d/\mathbb{R}{\bf 1}$ if for any point $x \in S$
the ordinary line segment $[x_0, x]$ is contained in $S$. 
\end{definition}

\begin{theorem}[Theorem 6 in \cite{Criado_2021}]\label{tm:polytrope}
    If all observations in the sample $\mathcal{S} \subset \mathbb{R}^d/\mathbb{R}{\bf 1}$ are in weak general position, then  each tropical Voronoi region of $\mathcal{S}$ is the star convex union of finitely many (possibly unbounded)
semi-polytropes.
\end{theorem}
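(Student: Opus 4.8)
The plan is to fix a site $s \in \mathcal{S}$, write
$V_s := \{x \in \mathbb{R}^d/\mathbb{R}{\bf 1} : d_{\rm tr}(x, s) \le d_{\rm tr}(x, s')\ \text{for all}\ s' \in \mathcal{S}\}$
for its tropical Voronoi region, and establish two things separately: (i) $V_s$ is star convex with center $s$; and (ii) $V_s$ is a union of finitely many (possibly unbounded) semi-polytropes. Only (ii) will use weak general position.

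For (i) I would use two elementary consequences of Definition \ref{def:tropicalMetric}. Writing $y := (1-t)s + t x$ for the ordinary segment point with $t \in [0,1]$, we have $y - s = t(x - s)$ and $y - x = -(1-t)(x - s)$; since scaling a vector by a nonnegative constant scales $\max_k(\cdot)_k - \min_k(\cdot)_k$ by that constant and since $\max_k(-a_k) - \min_k(-a_k) = \max_k(a_k) - \min_k(a_k)$, this yields the identities $d_{\rm tr}(y, s) = t\, d_{\rm tr}(x, s)$ and $d_{\rm tr}(y, x) = (1-t)\, d_{\rm tr}(x, s)$. Now for $x \in V_s$ and any $s' \in \mathcal{S}$, the triangle inequality (recall $d_{\rm tr}$ is a metric) gives $d_{\rm tr}(y, s') \ge d_{\rm tr}(x, s') - d_{\rm tr}(x, y) = d_{\rm tr}(x, s') - (1-t)\, d_{\rm tr}(x, s) \ge t\, d_{\rm tr}(x, s') \ge t\, d_{\rm tr}(x, s) = d_{\rm tr}(y, s)$, where the last two inequalities use $d_{\rm tr}(x, s) \le d_{\rm tr}(x, s')$. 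Hence $y \in V_s$, so $[s, x] \subseteq V_s$ and $V_s$ is star convex with center $s$. Note this half of the argument needs no general position hypothesis.

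For (ii) I would pass to the common refinement of the face fans of the tropical unit balls centered at the points of $\mathcal{S}$. By Theorem \ref{th:tropicalBall} each such ball is a polytrope, and each full-dimensional cone of its face fan is a sector on which a fixed coordinate of $x - s'$ attains the maximum and a fixed coordinate attains the minimum, so on that cone $d_{\rm tr}(\,\cdot\,, s')$ is affine. Consequently, on each of the finitely many cells $\sigma$ of the common refinement every function $d_{\rm tr}(\,\cdot\,, s')$, $s' \in \mathcal{S}$, is affine, so $V_s \cap \sigma$ is an ordinary polyhedron; intersecting it with the translated polytrope cone of $s$ containing it and with the finitely many Voronoi halfspaces $\{d_{\rm tr}(x,s) \le d_{\rm tr}(x,s')\}$, one checks using the hyperplane descriptions of tropical balls (Proposition 2.14 in \cite{Zhang_Vol}, as invoked earlier in this section) that $V_s \cap \sigma$ is tropically convex in the orientation determined by that sector, i.e.\ a semi-polytrope in the sense of \cite{Criado_2021}. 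Taking the union over $\sigma$ gives the finite decomposition, with unbounded cells accounting for the ``possibly unbounded'' clause.

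The crux, and the only place weak general position enters, is the semi-polytrope claim in (ii): if some pair $s, s'$ fails to be in weak general position then, by Theorem \ref{tm:decDim2} (equivalently Proposition 2 in \cite{CJS}), the bisector $\{d_{\rm tr}(x,s) = d_{\rm tr}(x,s')\}$ can be full dimensional and the cells $V_s \cap \sigma$ need not be semi-polytropes. So the real work is to show that weak general position of every pair forces each $V_s \cap \sigma$ into the semi-polytrope class and to organize the sector/cell bookkeeping; I would expect this combinatorial step — matching the argmax/argmin sectors of $s$ against those of the competing sites and verifying one-sided tropical convexity of the resulting polyhedra — to be the main obstacle, and would lean on the analysis of tropical bisectors in \cite{CJS} and of tropical Voronoi diagrams in \cite{Criado_2021} to carry it out.
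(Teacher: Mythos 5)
First, note that the paper does not actually prove Theorem \ref{tm:polytrope}: it is imported verbatim as Theorem 6 of \cite{Criado_2021}, so there is no in-paper argument to compare yours against, and I can only assess the proposal on its own terms. Your part (i) is correct and complete: for $y=(1-t)s+tx$ the identities $d_{\rm tr}(y,s)=t\,d_{\rm tr}(x,s)$ and $d_{\rm tr}(x,y)=(1-t)\,d_{\rm tr}(x,s)$ follow from positive homogeneity and sign-symmetry of $\max_k(\cdot)_k-\min_k(\cdot)_k$, and combined with the triangle inequality they give star convexity of each Voronoi region about its site, with no general-position hypothesis needed. This is the easy half of the theorem.

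The gap is in part (ii), and it sits exactly where the content of the theorem lies. On a cell $\sigma$ of your common refinement, each distance $d_{\rm tr}(x,s')$ has the form $x_{k'}-x_{l'}+\mathrm{const}$ for indices $k',l'$ depending on $s'$ and $\sigma$, so the Voronoi inequality $d_{\rm tr}(x,s)\le d_{\rm tr}(x,s')$ restricted to $\sigma$ reads $x_k-x_l-x_{k'}+x_{l'}\le c$, an inequality involving up to four distinct coordinates. Polyhedra cut out by such four-index inequalities are not in general tropically convex; (semi-)polytropes are precisely the (partially open) polyhedra describable by two-index inequalities $x_i-x_j\le c_{ij}$ as in \eqref{eq:kleen}. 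So the sentence ``one checks \dots that $V_s\cap\sigma$ is tropically convex'' is not a check — it is the theorem, and on the decomposition you chose it can simply fail. You acknowledge this (``the crux \dots the real work'') and then defer it to \cite{CJS} and \cite{Criado_2021}, which is circular for a statement taken from \cite{Criado_2021}. To close the gap you would need either a strictly finer decomposition than the common refinement of face fans — one adapted to the argmax/argmin index pairs so that all active comparison inequalities collapse to two-index form — or a direct argument, e.g.\ combining the star convexity from (i) with the structure of pairwise bisectors under weak general position, to certify the half-open polytrope structure of each piece. Your identification of where weak general position enters (excluding full-dimensional bisectors, cf.\ Theorem \ref{tm:decDim2}) is correct, but as written the decomposition step is asserted rather than proved.
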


    Many researchers study the geometry of a polytrope, which is both classically convex and tropically convex (for example, \cite{TRAN20171,joswigBook,Tran}).  It is well-known that a polytrope is a tropical simplex in $\mathbb{R}^d/\mathbb{R}{\bf 1}$ and their hyperplane representations on polytropes \cite{TRAN20171}.  Tran in \cite{TRAN20171} showed the hyperplane-representation of a polytrope $\mathcal{P}$ can be constructed from an associated \textit{Kleene Star} weight $d\times d$ matrix, $\mathbf{m^*}$ such that
    \begin{equation}\label{eq:kleen}
    \mathcal{P}=\{\mathbf{y}\in\mathbb{R}^d\;|\; y_j-y_i\leq -m_{ij}, y_1=0, m_{ij}\in \mathbf{m^*}, i \neq j\},
\end{equation}
where $m_{ij}$ is the $(i,j)$-th entry in $\mathbf{m^*}$. 
\begin{theorem}
    The decision boundary of the tropical CNN is union of hyperplanes defined by inequalities in \eqref{eq:kleen}.
\end{theorem}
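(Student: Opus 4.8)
The plan is to chain together three facts that have already been established and to treat this statement essentially as a corollary of them. Recall from the discussion preceding Theorem~\ref{th:tropicalBall} that every $x_0 \in \mathcal{B}$ lies in $\partial B_{-w^*_{i,q}}(r) \cap \partial B_{-w^*_{j,q'}}(r)$ for some pair of classes $i \neq j$, neurons $q \in [k_i]$, $q' \in [k_j]$, and radius $r = d_{\rm tr}(x_0, -w^*_{i,q}) = d_{\rm tr}(x_0, -w^*_{j,q'})$; and recall the theorem stated above asserting that $\mathcal{B}$ is carved out by a subset of the linear equations defining these tropical balls. By Theorem~\ref{th:tropicalBall} each ball $B_{-w^*_{c,l}}(r)$ is a polytrope, and by Tran's description~\eqref{eq:kleen} a polytrope is exactly a set of the form $\{\,y \mid y_b - y_a \le -m_{ab},\ y_1 = 0\,\}$ for a Kleene-star matrix $\mathbf{m}^*$. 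Hence the linear equations defining $B_{-w^*_{c,l}}(r)$ are precisely the hyperplanes $\{y_b - y_a = -m_{ab}\}$ with $m_{ab}$ an entry of the Kleene-star matrix of that ball, and the earlier theorem then immediately gives the claim.

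Carrying this out, I would proceed as follows. First, fix $x_0 \in \mathcal{B}$ and produce the data $(i,j,q,q',r)$ as above. Second, invoke Theorem~\ref{th:tropicalBall} to view $B_{-w^*_{i,q}}(r)$ and $B_{-w^*_{j,q'}}(r)$ as polytropes, and~\eqref{eq:kleen} to write down their Kleene-star hyperplane systems; a short unwinding of Definition~\ref{def:tropicalMetric} and Definition~\ref{def:tropicalBall} (exactly as done in the explicit systems of Examples~\ref{eg:contour} and~\ref{eg:contour2}) identifies the relevant Kleene-star matrix in terms of the center $-w^*_{c,l}$ and the radius $r$. Third, since lying in the topological boundary of a polytrope forces one of its defining inequalities to be tight, $x_0$ lies on a hyperplane $\{y_b - y_a = -m_{ab}\}$ coming from that representation (and, simultaneously, on one coming from the ball about $-w^*_{j,q'}$). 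Fourth, range over all $x_0 \in \mathcal{B}$, all ordered pairs of classes, all pairs of neurons, and the point-determined radii, and combine with the earlier theorem (that $\mathcal{B}$ is defined by a subset of these equations) and, when the weights are in weak general position, with Theorem~\ref{tm:polytrope} (which realizes each tropical Voronoi region as a star-convex union of semi-polytropes, hence with boundary lying on such hyperplanes), to conclude that $\mathcal{B}$ is a union of pieces of the hyperplanes occurring in~\eqref{eq:kleen}.

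The main obstacle is bookkeeping the role of the radius: unlike in a single fixed polytrope, here $r$ is not a free parameter but is pinned down by $x_0$, and it varies along $\mathcal{B}$. I would deal with this by observing that the Kleene-star matrix of $B_{-w^*_{c,l}}(r)$ depends affinely on $r$ (visible in Examples~\ref{eg:contour}--\ref{eg:contour2}), so that across all radii the hyperplanes sweep out only finitely many classes, each a parallel translate of a fixed hyperplane $\{y_b - y_a = \text{const}\}$ whose normal direction $e_a - e_b$ does not depend on $r$; this is exactly the structural rigidity that the form~\eqref{eq:kleen} encodes. A minor secondary point is that some inequalities in~\eqref{eq:kleen} may be redundant (not facet-defining) for a given ball, but this is harmless since the statement only asserts that $\mathcal{B}$ is assembled from a \emph{subset} of these hyperplanes; obtaining the minimal description would additionally require the shortest-path characterization of which entries $m_{ij}$ are tight.
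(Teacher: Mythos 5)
Your overall strategy mirrors the paper's: the paper states this theorem with no proof at all, offering it as an immediate consequence of the preceding chain (boundary points lie in $\partial B_{-w^*_{i,q}}(r)\cap\partial B_{-w^*_{j,q'}}(r)$, tropical balls are polytropes by Theorem~\ref{th:tropicalBall}, and polytropes have the Kleene-star description~\eqref{eq:kleen}), which is exactly the chain you assemble. However, there is a genuine gap at the step you yourself flag as ``the main obstacle,'' and your proposed fix does not close it. The pointwise argument shows only that each $x_0\in\mathcal{B}$ lies on some facet hyperplane of a ball of radius $r(x_0)$, where $r(x_0)$ varies continuously along $\mathcal{B}$. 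Observing that the normal directions $e_a-e_b$ are independent of $r$ tells you the relevant hyperplanes fall into finitely many parallel families, but as $r$ sweeps an interval these translates fill a full-dimensional slab, so ``each point lies on some translate'' does not produce a finite union of hyperplanes containing $\mathcal{B}$.

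Moreover, the conclusion cannot be rescued by better bookkeeping of $r$: to eliminate $r$ one must equate the two local linear expressions for the distances, $d_{\rm tr}(x,-w^*_{i,q}) = x_k - x_l + c$ and $d_{\rm tr}(x,-w^*_{j,q'}) = x_{k'} - x_{l'} + c'$, on a cell of the common refinement of the two normal fans, and the resulting hyperplane has normal $e_k - e_l - e_{k'} + e_{l'}$, which is generally not of the form $e_a - e_b$ required by~\eqref{eq:kleen}. The paper's own examples exhibit this: the pieces $x_1+x_2=1$ and $x_1+x_2=3$ in Example~\ref{eg:contour2}, and the pieces $x_2 = 2x_1-1$ and $x_2 = \tfrac12 x_1 + \tfrac{w}{2} - \tfrac12$ in Lemma~\ref{lemma:bisector_classification}, are not contained in any hyperplane of the form $y_j - y_i = \text{const}$ (which, in coordinates with $x_3=0$, are exactly $x_1-x_2$, $x_1$, or $x_2$ equal to a constant). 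So your argument, like the paper's implicit one, establishes at most that $\mathcal{B}$ is piecewise linear with pieces governed by the facet combinatorics of the tropical balls; a complete proof would have to either enlarge the allowed normals to differences of two Kleene-star normals, or restrict attention to configurations (as in Example~\ref{eg:contour}) where the two active facets share an index.
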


\begin{definition}[Tropical Fermat-Weber point]
A {\em tropical Fermat-Weber point} $x^*$ of a sample $\mathcal{S} = \{p_1, \ldots p_n\} \subset \mathbb{R}^d/\mathbb{R}{\bf 1}$ with respect to the {tropical metric} $d_{\rm tr}$ over the tropical projective torus $\mathbb{R}^d/\mathbb{R}{\bf 1}$ is defined by
\begin{equation}\label{eq:trop_FW}
x^* = \argmin_x \sum_{i=1}^n {d}_{\rm tr}(x, p_i).
\end{equation}    
\end{definition}

Let $w_{c} \in \mathbb{R}^d/\mathbb{R}{\bf 1}$ be a tropical Fermat-Weber point for a class $c \in [C]:=\{1, \ldots , C\}$ and we assume that each observation $X:=(X_1, \ldots , X_d)$ for each class $c \in [C]$ is distributed according to the Gaussian distribution around $w_{c}$ with the covariant matrix $\sigma I_d$, where $\sigma > 0$ and $I_d$ is the $d\times d$ identify matrix.  

Therefore we have the following theorems:
\begin{theorem}
    The distribution of the tropical distance from $d_{\rm tr}(w_{c}, X)$ is
    \[
    F(t) = P(d_{\rm tr}(w_{c}, X) \leq t) = d \int_{-\infty}^{\infty}[G(\sigma t + x) - G(x)]^{d-1}G'(x)dx
    \]
    for $t > 0$ where
    \[
    G(x) = \int_{-\infty}^x G'(t)dt, \mbox{ and } G'(t) = \frac{\exp(-\frac{t^2}{2})}{\sqrt{2\pi}}.
    \]
\end{theorem}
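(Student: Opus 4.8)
The plan is to reduce the statement to the classical formula for the cumulative distribution function of the \emph{range} (maximum minus minimum) of a sample of independent, identically distributed continuous random variables, applied to the coordinatewise differences between $w_c$ and $X$.

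First I would set $Z_i := w_{c,i} - X_i$ for $i = 1, \ldots, d$. Since $X = (X_1, \ldots, X_d)$ is Gaussian centered at $w_c$ with a diagonal covariance matrix, the coordinates $X_i$ are mutually independent and identically distributed, hence so are the $Z_i$, each a centered normal whose common distribution function is $x \mapsto G(\sigma x)$ and density $x \mapsto \sigma G'(\sigma x)$ in the normalization used here. By Definition \ref{def:tropicalMetric},
\[
d_{\rm tr}(w_c, X) = \max_{i \in \{1,\ldots,d\}}(w_{c,i} - X_i) - \min_{i \in \{1,\ldots,d\}}(w_{c,i} - X_i) = \max_i Z_i - \min_i Z_i ,
\]
so $d_{\rm tr}(w_c, X)$ is exactly the range of the sample $Z_1, \ldots, Z_d$, which is nonnegative; this is why only $t > 0$ is relevant.

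Next I would derive the range formula. The cleanest route uses the joint density of $U := \min_i Z_i$ and $V := \max_i Z_i$, which for an i.i.d.\ sample with distribution function $F$ and density $f$ equals $d(d-1)[F(v)-F(u)]^{d-2} f(u) f(v)$ on $\{u < v\}$; integrating over $\{u < v,\ v - u \le t\}$ and carrying out the inner integration in $v$ through the substitution $s = F(v) - F(u)$ collapses the constants and gives
\[
P\big(d_{\rm tr}(w_c, X) \le t\big) = d \int_{-\infty}^{\infty} \big[F(x+t) - F(x)\big]^{d-1} f(x)\, dx .
\]
(Equivalently one conditions on the index and value of the minimum; the events where the minimum is attained at more than one index have probability zero by absolute continuity, so the factor $d$ is exact.) Substituting $F(x) = G(\sigma x)$ and $f(x) = \sigma G'(\sigma x)$ and then changing variables $y = \sigma x$ cancels the Jacobian against the $\sigma$ in $f$ and turns the increment into $\sigma t$, producing the asserted expression $F(t) = d \int_{-\infty}^{\infty}[G(\sigma t + y) - G(y)]^{d-1} G'(y)\, dy$.

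I do not expect a genuine obstacle; the work is bookkeeping. One must confirm that the measure-zero coincidences (ties among the $Z_i$, or a repeated minimum or maximum) contribute nothing, which follows from the normal density being bounded and continuous, and one must track the $\sigma$-scaling so that it lands inside the argument of $G$ exactly as stated. As a consistency check I would verify that the resulting $F$ is a bona fide c.d.f.\ on $(0,\infty)$: as $t \to \infty$ the integrand tends to $(1-G(y))^{d-1}G'(y)$ whose integral is $1/d$, so $F(t) \to 1$, and as $t \to 0^+$ the bracketed increment vanishes, so $F(t) \to 0$ for $d \ge 2$ (and for $d = 1$ the range is identically $0$, in agreement with the degenerate formula).
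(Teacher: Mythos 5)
Your proposal is correct and follows essentially the same route as the paper: both reduce $d_{\rm tr}(w_c, X)$ to the range $\max_i Z_i - \min_i Z_i$ of $d$ i.i.d.\ normal coordinates and then apply the classical formula for the distribution of the range of an i.i.d.\ sample. The only difference is that you derive that range formula yourself from the joint density of the minimum and maximum, whereas the paper simply cites it, and your explicit bookkeeping of the $\sigma$-scaling is a useful clarification, since the stated integrand only comes out as $G(\sigma t + x)-G(x)$ when $\sigma$ enters as the reciprocal of the coordinate standard deviation.
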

\begin{proof}
    Without loss of generality we set $w_{c}:=(w_c^1, \ldots , w_c^d) = (0, \ldots , 0)$. Then we have 
    \[
    d_{\rm tr}(0, X) = \max X_i - \min X_i .
    \]
    This is the distribution of the range of $d$ i.i.d. normal random variables with mean $0$ and standard deviation  $\sigma > 0$.  Then we use the result from \cite{range}.
\end{proof}
\begin{theorem}\label{upperbound}
Let \[
    E(l, d) = \frac{\sigma d!}{(l - 1)!(d - l)!}\int_{-\infty}^{\infty}x (1 - \Phi (x))^{l-1} (\Phi)^{d-l}\phi(x) dx.
    \]
Let $w^{(c)}$ be a tropical Fermat-Weber point.
Then we have
    \begin{eqnarray*}
        P(d_{\rm tr}(w_{c}, X) \geq r) &\leq &\frac{E(d, d) - E(1, d)}{r}
    \end{eqnarray*}
    for some distance $r > 0$.
\end{theorem}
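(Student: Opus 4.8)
The plan is to apply Markov's inequality to the nonnegative random variable $d_{\rm tr}(w_c, X)$ and then evaluate its expectation as a difference of expected extreme order statistics of a Gaussian sample.

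\emph{Reduction to the range of a Gaussian sample.} As in the proof of the preceding theorem, we may take $w_c = (0,\dots,0)$ without loss of generality, so that $d_{\rm tr}(w_c, X) = \max_i X_i - \min_i X_i$ is the range of $d$ i.i.d.\ $N(0,\sigma^2)$ variables; in particular it is nonnegative almost surely. Writing $X_i = \sigma Z_i$ with $Z_1,\dots,Z_d$ i.i.d.\ standard normal, this range equals $\sigma\bigl(\max_i Z_i - \min_i Z_i\bigr)$, which already explains why the factor $\sigma$ and the standard normal $\Phi,\phi$ appear in the definition of $E(l,d)$.

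\emph{Markov's inequality and the order statistics.} Since $d_{\rm tr}(w_c, X)\ge 0$ and its expectation is finite (Gaussian tails decay superpolynomially), for every $r>0$ Markov's inequality gives $P(d_{\rm tr}(w_c, X)\ge r)\le \mathbb{E}[d_{\rm tr}(w_c, X)]/r$, and by linearity $\mathbb{E}[d_{\rm tr}(w_c, X)] = \mathbb{E}[\max_i X_i] - \mathbb{E}[\min_i X_i]$. Now recall that the $l$-th largest among $d$ i.i.d.\ samples from a distribution with CDF $\Phi$ and density $\phi$ has density $\frac{d!}{(l-1)!(d-l)!}(1-\Phi(x))^{l-1}\Phi(x)^{d-l}\phi(x)$; multiplying by $x$, integrating, and pulling out $\sigma$ via $X_i=\sigma Z_i$ identifies the resulting expectation with $E(l,d)$. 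Taking $l=1$ the density collapses to $d\,\Phi(x)^{d-1}\phi(x)$, the density of the maximum, so $E(1,d)=\mathbb{E}[\max_i X_i]$; taking $l=d$ it collapses to $d\,(1-\Phi(x))^{d-1}\phi(x)$, the density of the minimum, so $E(d,d)=\mathbb{E}[\min_i X_i]$. Substituting gives $P(d_{\rm tr}(w_c, X)\ge r)\le \tfrac{E(1,d)-E(d,d)}{r}$, i.e.\ the claimed bound with numerator $E(d,d)-E(1,d)$ once the sign convention in the definition of $E(l,d)$ (hence which of $E(1,d),E(d,d)$ records the maximum versus the minimum) is matched to the statement.

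\emph{Main obstacle.} There is no substantive difficulty: the argument is Markov's inequality together with the classical formula for the expectation of order statistics of an i.i.d.\ sample. The only thing requiring care is the bookkeeping in the last step --- matching the $\Phi$ versus $1-\Phi$ placement in the definition of $E(l,d)$ to the max/min labelling, and correctly extracting the scale factor $\sigma$ from the standardization $X_i=\sigma Z_i$ --- plus recording at the outset that $\mathbb{E}[\max_i X_i]$ and $\mathbb{E}[\min_i X_i]$ are finite so that Markov's inequality yields a finite right-hand side.
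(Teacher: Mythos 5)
Your proof is correct and follows essentially the same route as the paper's: reduce to the range of $d$ i.i.d.\ $N(0,\sigma^2)$ variables, apply Markov's inequality, and identify $\mathbb{E}[\max_i X_i]-\mathbb{E}[\min_i X_i]$ via the classical order-statistic expectation formula. Your careful bookkeeping in the last step is in fact more scrupulous than the paper's own proof, which simply asserts $\mathbb{E}[\max X_i]=E(d,d)$ and $\mathbb{E}[\min X_i]=E(1,d)$; with the density factor $(1-\Phi(x))^{l-1}\Phi(x)^{d-l}$ as printed, the index $l$ counts from the largest, so as you note the numerator should read $E(1,d)-E(d,d)$ unless the roles of $\Phi$ and $1-\Phi$ in the definition of $E(l,d)$ are swapped.
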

\begin{proof}
   Without loss of generality we set $w_{c}:=(w_c^1, \ldots , w_c^d) = (0, \ldots , 0)$.  By \cite{OrderedStat}, we have the expectation of $l$th ordered statistic of $d$ many i.i.d. standard normal random variables is
   \[
   E'(l, d) = \frac{d!}{(l - 1)!(d - l)!}\int_{-\infty}^{\infty}x (1 - \Phi (x))^{l-1} (\Phi)^{d-l}\phi(x) dx.
   \]
   So the expectation of $l$th ordered statistic of $d$ many i.i.d.  normal random variables around $0$ with its standard deviation $\sigma$ is
   \[
   E(l, d) = \frac{\sigma d!}{(l - 1)!(d - l)!}\int_{-\infty}^{\infty}x (1 - \Phi (x))^{l-1} (\Phi)^{d-l}\phi(x) dx.
   \]
   Thus we have 
   \[
   d_{\rm tr}(0, X) = \max X_i - \min X_i = E(d, d) - E(1, d).
   \]
   Then by Markov inequality we have
   \begin{eqnarray*}
       P(d_{\rm tr}(0, X) \geq r) &\leq &\frac{E(d, d) - E(1, d)}{r}
   \end{eqnarray*}
   for $r > 0$.  
\end{proof}

\begin{remark}
    Barnhill et al.~in \cite{BSYM} showed that a tropical Fermat-Weber point is very stable by Theorem 3 and it is robust against outlier(s).  
\end{remark}

\section{Computational Experiments}\label{sec:comp_experiments} 

Our experiment is to show that using a tropical layer as the final layer of a standard, convolutional neural network can result in a model that is as accurate as baseline models, requires no additional parameters, trains in a similar time, and retains more prediction power when predicting adversarially perturbed input data when compared to baseline models. Our experiment was completed in Python using Tensorflow and high performance computing resources.
Our method begins by training tropical CNN models and other benchmark models to predict labels on the training sets of three benchmark image datasets: MNIST~\cite{MNIST}, SVHN~\cite{SVHN}, and CIFAR-10~\cite{CIFAR10}. Following training, we attacked the test set of data using well-known techniques with varying norm constraints. To evaluate our model's robust characteristics, we compare our tropical CNN's test set error percentage against the test set error percentage of other baseline models. To ensure comparisons are appropriately made, models for each dataset have the same base model, and only differ in the final layer they use or in the regularization technique employed. Benchmark models are both clean and adversarially trained~\cite{madry2019deep}~\cite{goodfellow2015explaining} ReLU and Maxout~\cite{goodfellow2013maxout} models and the Maximum Margin Regularizer - Universal (MMR) adapted from ~\cite{croce2020provable}. Further details on model construction, tropical layer implementation, attack hyperparameters, and computation times are in Appendix \ref{appendix:experiment}. 

For the CIFAR-10 dataset, we use a ResNet50 model~\cite{he2015deep} as our base model\footnote{MMR was not evaluated for CIFAR-10 as re-creating the MMR-Universal regularizer for a model as large as ResNet50 exceeded our computational budget for the research}. For MNIST and SVHN, we use a more simple convolutional neural network with three convolution layers and one fully connected layer. More details on the simple base model is outlined in Appendix Table \ref{table:base-model}.

To evaluate robustness, we utilized the following attacks

\begin{itemize}[itemsep=0pt,parsep=0pt,left=0pt]
    \item \emph{SLIDE}. $\ell_1$ 
    attack defined in~\cite{tramèr2019adversarial}. 
    \item \emph{PGD}. $\ell_2$ and $\ell_\infty$ Projected Gradient Descent from ~\cite{madry2019deep}. 
    For $\ell_2$, we use common Tensorflow methods to implement. For $\ell_\infty$, we use the implementation in the CleverHans GitHub repository~\cite{papernot2018cleverhans}.
    \item \emph{CW}. The $\ell_2$ Carlini and Wagner attack defined in~\cite{carlini2017evaluating} using the implementation from the CleverHans repository. 
    \item \emph{SPSA}. 
    Gradient-free $\ell_\infty$ attack utilizing CleverHans implementation of SPSA from~\cite{uesato2018adversarial}.
\end{itemize}

\subsection{Results}\label{subsec:result}

The results from our experiment are in Table \ref{tab:mnist_attack_results}, \ref{tab:svhn_attack_results}, and \ref{tab:cifar_attack_results} for our models trained on the MNIST, SVHN, and CIFAR-10 datasets, respectively.
In MNIST and CIFAR in particular, our tropical CNN outperformed the ReLU and maxout models against all attacks when trained normally. More is said on the adversarially trained models in the Discussion.
Another standout result from each of these tables is that the CW attack, a powerful attack, routinely performed worse on the tropical CNN, compared to the other models. We describe some possibilities as to why it is unable to find an adversarial example in Appendix \ref{section:attacks}. 

\begin{table}[h]
\centering
\begin{adjustbox}{width=\columnwidth}
\begin{threeparttable}
\caption{MNIST results. Values reported are error percentage on the test set.}
\label{tab:mnist_attack_results}
\begin{tabular}{lcccccc}\toprule
 & & \textbf{$\ell_1$ ($\epsilon=5.6$)} & \multicolumn{2}{c}{\textbf{$\ell_2$ ($\epsilon=2.8$)}} & \multicolumn{2}{c}{\textbf{$\ell_\infty$ ($\epsilon=0.1$)}}  \\
 \cmidrule(lr){3-3}\cmidrule(lr){4-5}\cmidrule(lr){6-7}
 \textbf{Model} & \textbf{Clean}& \textbf{SLIDE} & \textbf{PGD} & \textbf{CW (mean $\ell_2$)\tnote{*}}  & \textbf{PGD} & \textbf{SPSA} \\
\midrule
ReLU & 0.99 \% & 20.93 \% & 36.39 \% & 99.25 \% (2.28)  & 16.44 \% & 30.33 \%\\
Maxout & 0.81 & 56.74 & 92.37 & 99.41 (2.22) & 39.78 & 29.45 \\
\rowcolor{TableHighlight}
Tropical & 0.71 & 15.01 & 27.59 & 5.97 (1.48)  & 8.74 & 3.91 \\
\midrule
ReLU+AT\tnote{\dag} & 0.61 & 9.81 & 15.34 & 99.55 (3.89) & 3.19 & 2.94 \\
Maxout+AT & 0.87 & 22.69 & 28.08 & 99.26 (3.71) & 3.91 & 3.51 \\
\rowcolor{TableHighlight}
Tropical+AT & 0.66 & 11.15 & 12.37 & 22.38 (3.25) & 3.21 & 2.77 \\
MMR & 0.66 & 0.69 & 0.74 & 99.46 (5.59) & 0.78 & 12.54 \\
\bottomrule
\end{tabular}
    \begin{tablenotes}
        \item[*] The mean $\ell_2$ distortion for the test set for adversarial examples found. If test error for $CW<100\%$, then mean computed only on adversarial examples the CW algorithm was able to find. Higher distortion indicates it is more difficult to find adversarial examples for the model.
        \item[\dag] +AT indicates the model was trained with examples that had been perturbed using the $\ell_\infty$ PGD attack as described in Section \ref{section:AT}
    \end{tablenotes}
\end{threeparttable}
\end{adjustbox}
\end{table}

\begin{table}[h]
\centering
\begin{adjustbox}{width=\columnwidth}
\begin{threeparttable}
\caption{SVHN results. Values reported are error percentage on the test set.}
\label{tab:svhn_attack_results}
\begin{tabular}{lcccccc}\toprule
 & & \textbf{$\ell_1$ ($\epsilon=1.74$)} & \multicolumn{2}{c}{\textbf{$\ell_2$ ($\epsilon=0.87$)}} & \multicolumn{2}{c}{\textbf{$\ell_\infty$ ($\epsilon=\frac{4}{255}$)}}  \\
 \cmidrule(lr){3-3}\cmidrule(lr){4-5}\cmidrule(lr){6-7}
 \textbf{Model} & \textbf{Clean}& \textbf{SLIDE} & \textbf{PGD} & \textbf{CW (mean $\ell_2$)} & \textbf{PGD} & \textbf{SPSA} \\
\midrule
ReLU & 9.96 \% & 43.42 \% & 66.85 \% & 94.92 \% (0.63) & 61.88 \% & 85.67 \% \\
Maxout & 10.54 & 67.01 & 96.52 & 94.38 (0.47) & 95.88 & 92.51 \\
\rowcolor{TableHighlight}
Tropical & 10.56 & 42.04 & 75.17 & 42.20 (0.82) & 68.65 & 36.33 \\
\midrule
ReLU+AT & 9.41 & 29.76 & 42.77 & 95.05 (0.97) & 35.74 & 28.60 \\
Maxout+AT & 8.66 & 29.50 & 43.14 & 95.31 (0.92) & 35.25 & 30.21 \\
\rowcolor{TableHighlight}
Tropical+AT & 11.09 & 32.84 & 44.02 & 93.75 (0.94) & 37.20 & 31.04 \\
MMR & 12.69 & 27.81 & 31.82 & 93.76 (1.02) & 31.46 & 75.51 \\
\bottomrule
\end{tabular}
\end{threeparttable}
\end{adjustbox}
\end{table}

\begin{table}[h]
\centering
\begin{adjustbox}{width=\columnwidth}
\begin{threeparttable}
\caption{CIFAR-10 results. Values reported are error percentage on the test set.}
\label{tab:cifar_attack_results}
\begin{tabular}{lcccccc}\toprule
 & & \textbf{$\ell_1$ ($\epsilon=1.74$)} & \multicolumn{2}{c}{\textbf{$\ell_2$ ($\epsilon=0.87$)}} & \multicolumn{2}{c}{\textbf{$\ell_\infty$ ($\epsilon=\frac{4}{255}$)}}  \\
 \cmidrule(lr){3-3}\cmidrule(lr){4-5}\cmidrule(lr){6-7}
 \textbf{Model} & \textbf{Clean}& \textbf{SLIDE} & \textbf{PGD} & \textbf{CW (mean $\ell_2$)} & \textbf{PGD} & \textbf{SPSA} \\
\midrule
ReLU & 29.61 \% & 54.51 \% & 88.60 \% & 85.48 \% (0.55) & 86.72 \% & 83.79 \% \\
Maxout & 27.74 & 54.48 & 97.92 & 86.32 (0.53) & 95.31 & 85.53 \\
\rowcolor{TableHighlight}
Tropical & 29.13 & 51.81 & 73.39 & 60.1 (0.87) & 71.54 & 59.86 \\
\midrule
ReLU+AT & 32.01 & 42.30 & 67.46 & 85.31 (1.15) & 65.08 & 55.16 \\
Maxout+AT & 32.47 & 42.58 & 67.54 & 85.23 (1.14) & 64.98 & 56.03 \\
\rowcolor{TableHighlight}
Tropical+AT & 31.69 & 42.57 & 67.08 & 66.35 (1.04) & 64.67 & 54.84 \\
\bottomrule
\end{tabular}
\end{threeparttable}
\end{adjustbox}
\end{table}

\subsection{Decision Boundaries Visualized}\label{sec:decision_toy_prob}

\begin{example}\label{eg:contour_mnist}
    Let us consider a tropical CNN with the same base model as the MNIST and SVHN models, except the layer before our tropical layer contains only three neurons. Despite this reduction in neurons, a very accurate model can be trained on the 10 classes of the MNIST dataset ($97.5\%$ test accuracy). We then took the trained weights of our tropical layer (10 neurons with three weights each), projected them onto the three dimensional tropical projective torus $\mathbb{R}^3/\mathbb{R}{\bf 1}$, which is isomorphic to the two dimensional Euclidean space $\mathbb{R}^2$, by subtracting all weights by the middle weight, and computed a contour plot that shows tropical distances to the nearest class of the 10 classes. Following this, we fed a subset of training data from each class into the model in order to capture the outputs at our three-neuron layer. We then projected the outputs of the training data at this layer onto the tropical projective torus in the same manner. From this, we can compute the decision boundary of our network as they relate to the weights in our tropical layer. Figure \ref{fig:contour_mnist} and \ref{fig:voronoi_mnist} show a two dimensional representation of our neural network decision boundaries in the tropical projective torus as well as where the training data falls relative to the tropical weights, providing an intuitive visual of the tropical decision boundary described in Section \ref{sec:Boundary}. We provide a ReLU CNN analog to this tropical CNN example in Appendix \ref{appendix:relu_decision}.
\begin{figure}[h!]
\centering
\begin{subfigure}[b]{0.48\columnwidth} 
    \centering
    \includegraphics[width=\linewidth]{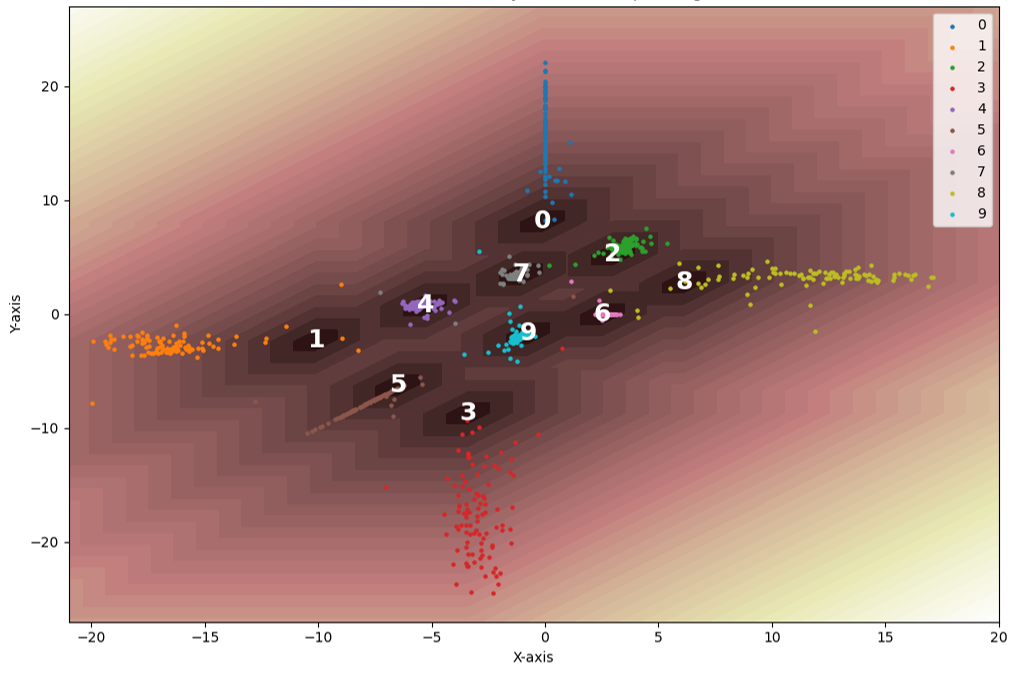}
    \caption{Heat-map plot for distance from trained weights with samples of input data as they are fed into the tropical logit layer.}
    \label{fig:contour_mnist}
\end{subfigure}
\hfill 
\begin{subfigure}[b]{0.48\columnwidth} 
    \centering
    \includegraphics[width=\linewidth]{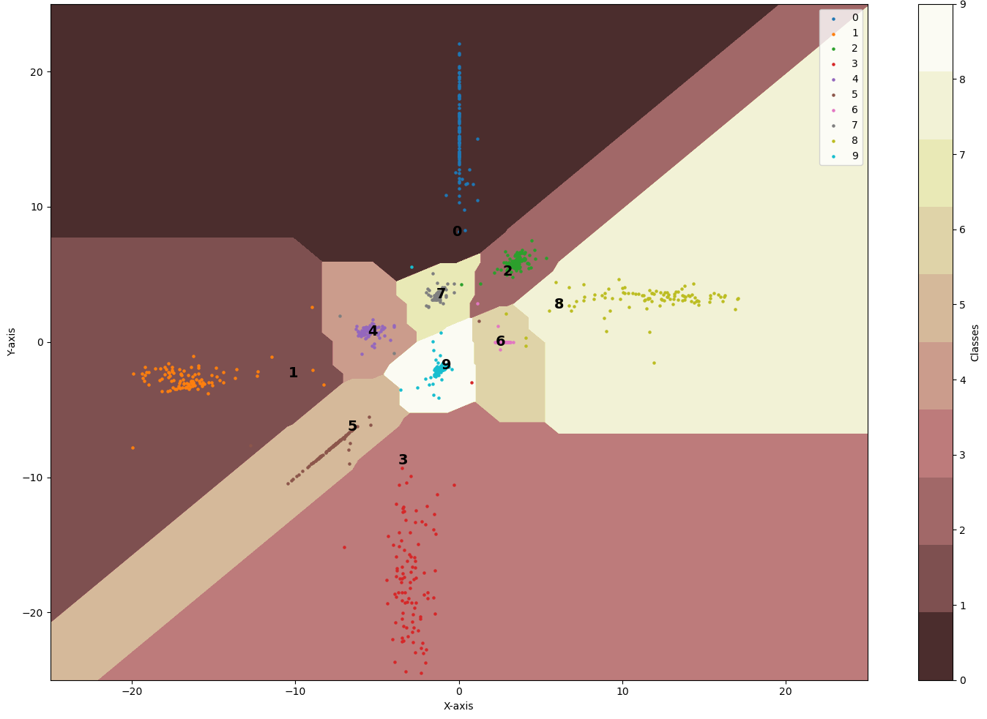}
    \caption{Voronoi cell plot for distance from trained weights with samples of input data as they are fed into the tropical logit layer.}
    \label{fig:voronoi_mnist}
\end{subfigure}
\caption{Decision boundaries MNIST-trained example.}
\label{fig:combined_mnist}
\end{figure}
\end{example}

\subsection{Discussion}
For CIFAR-10 and MNIST datasets, the test error for the tropical model more closely matched the robust models across all attacks, while it maintained parity with standard models on SVHN. However, outside of the class of PGD attacks (ie CW and SPSA), the tropical model still more closely resembled the robust models in test error on SVHN. 

 Across attacks and data sets, the AT effect of straitening of decision boundaries appears to negate the effect of the tropical embedding layer on the decision boundary discussed in this paper~\cite{moosavidezfooli2018robustness}. Notably, the CW attack was able to find more adversarial examples on a tropical model with AT than a tropical model without AT, although at a greater mean $\ell_2$.

The elegance of our tropical CNN is that we are able to achieve robust characteristics and maintain a high level of accuracy, while remaining computationally efficient and without increasing model size. This is especially desirable since state-of-the-art techniques for ensuring robustness in neural networks are prohibitively expensive in terms of computational budget or in terms of accuracy lost on unperturbed data. Our tropical model is able to maintain a parity with benchmark models in terms of accuracy, without noticeably increasing computation time or number of model parameters. Precise training times and parameter counts are details in \ref{appendix:comp_times}. The MMR models, for instance, showed outstanding results in terms of robust characteristics, but took roughly two orders of magnitude longer to train than the tropical model, whereas our tropical model trains in nearly the same time as the ReLU model for every dataset evaluated. The differences in attack results in Table \ref{tab:mnist_attack_results} between the MMR and tropical models is stark as the MMR model far outperformed on all attacks except for the CW and SPSA attack, but the idea that we can achieve robust properties in a neural network with a simple change to how 1 layer connects, without damaging computational efficiency in a material way, is an novel way to apply the properties of tropical geometry in neural networks. 

One can imagine that the computational complexity of our tropical model would increase as the dimensionality of the tropical layer itself increases given we have $max$ and a $min$ operations in our formulation, but embedding the tropical layer as the last layer keeps the dimensionality relatively low and depends only on the number of classes the model is trying to predict and the number of neurons in the layer preceding the final layer. The training times observed in \ref{appendix:comp_times} show that the models in our experiment experienced no adverse training time impact, especially when compared the ReLU counterpart of the exact same construction.

\subsection{Adversarial Attacks on Classifiers}\label{section:attacks}
Here we consider \textit{classifiers} that can be viewed as functions $f: \mathbb{R}^d \times \Theta \rightarrow [C]$, where $d \in \{1, 2, \dots\}$ is the number of input features, $\Theta$ is a (finite-dimensional) space of feasible model parameters (e.g., the weights and biases of a neural network), and $[C] := \{1, \ldots, C\}$ for $C \in \{2, 3, \ldots, \}$ is a finite set of class labels. Given $\theta \in \Theta$, an input with features $x \in \mathbb{R}^d$ is predicted to be of class $f(x, \theta) \in [C]$. The model parameter $\theta$ is typically chosen with the aid of a finite set $D \subset \mathbb{R}^d \times [C]$ of $n = |D|$ training examples. More precisely, let $L(\hat{y}, y)$ denote the loss incurred when the predicted class is $\hat{y} \in [C]$ and the true class is $y \in [C]$. The model parameter $\theta$ is typically ``fitted'' to the training dataset $D$ by solving the following optimization problem:

\begin{align*}
   \text{minimize} &\quad \frac{1}{n} \sum_{(x, y) \in D} L\left(f(x, \theta), y\right) \\
       \text{subject to} &\quad \theta \in \Theta 
\end{align*}
where $n$ is the number of observations in the training dataset $D$.

Classifier neural networks take input pairs $(x,y)$, $x \in \mathbb{R}^d$ and $y \in {K}$, and fit a set of parameters $\Theta$ on a function $f(x;\Theta): (x_1,...,x_d) \xrightarrow{} (x_1,...,x_k)$ to minimize an expected loss function $L(f(x,\Theta), y)$. Then the problem can be formulated as an optimization problem:
    $$min_{\Theta} \ L(f(x;\Theta), y)$$

Fixing $\theta \in \Theta$ (e.g., supposing that the model has been fitted), an \textit{adversarial attack} consists of adding perturbations to certain inputs $x \in \mathbb{R}^d$, with the aim of forcing the fitted model to mis-classify these inputs. For example, if the aim is to force the model to mis-classify the input $x$, whose true class is $y \in [C]$, the perturbation $\delta \in \mathbb{R}^d$ is selected by solving the following optimization problem, where $\Delta$ is the set of feasible perturbations:

\begin{align*}
   \text{maximize} &\quad L\left( f(x + \delta, \theta), y \right) \\
   \text{subject to} &\quad \delta \in \Delta
\end{align*}
The feasible region $\Delta$ is often taken to be an $\epsilon-$ball with respect to a specified norm $\|\cdot\|$ on $\mathbb{R}^d$ (e.g., an $\ell_p$ norm); see e.g., \cite{madry2019deep}.
Alternatively, the adversary seeks to force misclassification, analogous to maximizing the expected value of $L$, without changing the true class of the input. The attackers problem is constrained to some budget $\epsilon$, the maximum perturbation with respect to some $\norm{.}{p}$ which maintains perceptual similarity ~\cite{goodfellow2015explaining}. Then, the attackers problem can be formulated as:
$$\max_{\delta} \ L(f(x + \delta; \Theta), y)  \ s.t. \ \norm{\delta}{p} \leq \epsilon$$

We will consider three well-known approaches for generating adversarial attacks. 
The first, Projected Gradient Descent (PGD), consists of iteratively perturbing a clean input in the direction of the gradient of the mapping $\delta \mapsto L\left(f(x + \delta, \theta), y\right)$; see e.g., \cite{madry2019deep}. Specifically, starting with an initial perturbation $\delta_0$, setting $t = 0$, and for $x \in \mathbb{R}^d$ letting $\Pi_\Delta [x] := \argmin_{\delta \in \Delta} \|x - \delta\|$ denote the projection of $x$ onto $\Delta$, PGD generates a sequence of perturbations $\delta_t$ where
$$
   \delta_{t+1} = \Pi_\Delta \left[ \delta_t + \alpha \nabla L\left(f(x + \delta_t, \theta\right), y) \right], \qquad t = 0, 1, \dots
$$
using a given step size $\alpha$.
$$ x^{t+1} = \Pi_S(x_0) (x_t + \alpha \nabla_x L(f(x,\Theta),y)$$
Where $\Pi_S(x_0)$ is the projection onto the set $S(x_0) = \{x \in \mathbb{R}^d \ | \ \norm{x - x_0}{p} < \epsilon\}$
The second approach, due to Carlini \& Wagner \cite{Carlini17}, is based on solving an optimization problem to find the smallest perturbation (with respect to a given $\ell_p$ norm) that will cause the input $x$ to be mis-classified as being from a target class $\tau$. Specifically, letting $f: \mathbb{R}^d \rightarrow \mathbb{R}$ be a function such that $f(x + \delta) \leq 0$ if and only if $x + \delta$ is classified as being from class $\tau$, and letting $c$ be a positive constant, the optimization problem is to
\begin{align*}
   \text{minimize} &\quad \| \delta \|_p + c \cdot f(x + \delta) \\
   \text{subject to} &\quad x + \delta \ \ \text{is a valid input}
\end{align*}
Here, ``valid input'' can mean, for example, that $x + \delta$ is an image with valid pixel values (e.g., on $[0, 1]$). Carlini \& Wagner \cite{Carlini17} experimented with various objective functions $f$ and constants $c$ in order to develop tailored $\ell_2$, $\ell_0$, and $\ell_\infty$ attacks able to defeat (at the time) state-of-the-art defenses. 
The following example may explain the possible reason why tropical CNN is robust against CW.
\begin{example}
We consider the perturbation $\delta$ of a feature $x \in \mathbb{R}^3/\mathbb{R}{\bf 1}$ by Carlini \& Wagner (CW) attack
in the same setting as in Figure~\ref{fig:bisector_stability} and Lemma~\ref{lemma:bisector_classification}, where the bisector between $w^*_{1, 1}=(0, 0, 0)$ and $w^*_{2, 1} = (1, w, 0)$ is the decision boundary.
Without loss of generality, we assume that $x$ belongs to the class for $w^*_{1, 1}$.
For simplicity, we consider there is no hidden layer (=logistic regression).
Then the perturbation $\delta$ to make $x':=x+\delta$ look like $w^*_{1, 1}$ is given by solving the following optimization:
\[
   \min_{x'} \quad  -d_{\rm tr}(w^*_{2, 1}, x') + d_{\rm tr}(w^*_{1, 1}, x') + c d_{\rm tr}(x, x')
\label{eq:tropicalCW1}.
\]
This is because we share the term $d_{\rm tr}(w^*_{1, 1}, x)$ in the loss function of the tropical CNN and the constraint function of CW.
We define a gradient flow for the distance function as in Figure~\ref{fig:gradient} (left).
(This is basically the gradient of the distance function except at the boundary, on which we selected one natural flow from the subgradient.)
The gradient flow when $w=0.5$ and $c=0$ is illustrated in Figure~\ref{fig:gradient} (right).
The gradient flow is rather inefficient and it can be even $0$ in the green regions.
Note that the small term proportional to $c$ that attracts $x'$ to $x$ should be added further.
Then if $x$ is in one of the green regions, $x'$ just goes back to $x$.
This result may support the idea that tropical CW attack is rather inefficient.
\end{example}
\begin{figure}[h]
   \centering
    \includegraphics[width=1\textwidth]{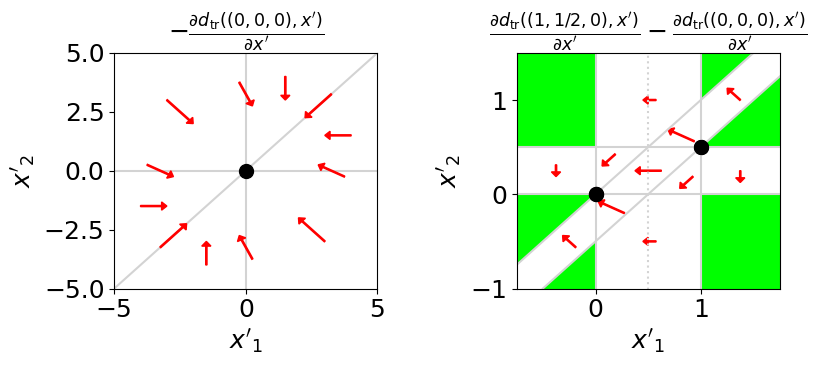}
   \caption{The gradient flow of the distant function from the origin (left) and the gradient flow for the CW attack with $c=0$ (right). The dotted line ($x=0.5$) represents the decision boundary. The gradient flow is zero at the green regions.}
   \label{fig:gradient}
\end{figure}

Carlini and Wagner's (C\&W) attack bypasses the loss function and maximizes the difference between the logits associated with the input class, $Z(x)_i$, and alternative adversary class, $Z(x)_t$ \cite{Carlini17}:
$$\min_{x,t} \ Z(x)_i - Z(x)_t$$ $$s.t. \norm{x - x_0}{p} < \epsilon$$ 
Finally, we consider a ``gradient-free'' method called Simultaneous Perturbation Stochastic Approximation (SPSA) \cite{spall::2003}, which was used by Uesato et al.~\cite{uesato2018adversarial} to generate adversarial attacks.

The final method, collectively known as gradient-free, attacks models defended by obfuscating the loss function gradient  ~\cite{uesato2018adversarial},~\cite{athalye2018obfuscated}. There are a variety of gradient-free attacks with different approaches, but in general, they are less powerful than the gradient based approaches listed previously. In this paper, we use the SPSA attack ~\cite{uesato2018adversarial} which takes stochastic sample perturbations within an $\epsilon > 0$ ball of $x_0$ and estimates loss gradient based off their difference.

Possible reason why tropical CNN is robust against CW:
We assume that $c \in [C]$ is the correct class for an observation $x_c \in \mathbb{R}^d/\mathbb{R}{\bf 1}$ in the dataset such that the true response for an observation $x_c$ is $c \in [C]$ and $\hat{w}^{(c)} \in \mathbb{R}^d/\mathbb{R}{\bf 1}$ is the estimated weight for the class $c\in [C]$. Suppose $\hat{w}^{(c')} \in \mathbb{R}^d/\mathbb{R}{\bf 1}$ is the estimated weight for the class $c'\in [C]$ such that $c' \not = c$. We would like to note that a tropical geodesic between two points is not unique (for example, Example 5.21 in \cite{ETC}).  In fact the set of all possible tropical geodesics between two points forms a tropical polytrope, which is a classical polytope and tropical polytope in the tropical projective torus.  Let $P(\hat{w}^{(c)}, \hat{w}^{(c')})$ be the set of all tropical geodesics between $\hat{w}^{(c)}$ and $\hat{w}^{(c)}$, which is a polytrope containing $\hat{w}^{(c)}$ and $\hat{w}^{(c')}$.  Also note that the decision boundary $\mathcal{B}(\hat{w}^{(c)}, \hat{w}^{(c')})$ defined by $\hat{w}^{(c)}$ and $\hat{w}^{(c')}$ dissects $P(\hat{w}^{(c)}, \hat{w}^{(c')})$ such that
\[
r_{c, c'}:= d_{\rm tr}(\hat{w}^{(c)}, \mathcal{B}(\hat{w}^{(c)}, \hat{w}^{(c')})) = d_{\rm tr}(\hat{w}^{(c')}, \mathcal{B}(\hat{w}^{(c)}, \hat{w}^{(c')}))
\]
and 
\[
\mathcal{B}(\hat{w}^{(c)} , \hat{w}^{(c')}) \cap P(\hat{w}^{(c)} , \hat{w}^{(c')})  \not = \emptyset .
\]
We consider a case which the perturbation by  Carlini \& Wagner (CW) attack $x + \delta \in \mathbb{R}^d/\mathbb{R}{\bf 1}$ is near the decision boundary $\mathcal{B}(\hat{w}^{(c)}, \hat{w}^{(c')})$ defined by $\hat{w}^{(c)}$ and $\hat{w}^{(c')}$.  This means that $x + \delta \in P(\hat{w}^{(c)} , \hat{w}^{(c')})$.  We also assume that $x \in P$ without loss of generality since if $x \not \in P(\hat{w}^{(c)} , \hat{w}^{(c')})$ then for some iterations $t$, the perturbation by CW attack $x:= x + \delta_t $ is closer to the decision boundary, i.e., $x:= x + \delta_t  \in P$.

Without loss of generality, we assume that $\hat{w}^{(c)} = 0$.  
Here we have 
\begin{eqnarray}\label{eq:tropicalCW}
   \text{minimize in }\delta &\quad  d_{\rm tr}(0, x + \delta) \\\nonumber
   \text{subject to} &\quad d_{\rm tr}(0, x + \delta) \geq r_{c, c'}.
\end{eqnarray}

Let 
\[
y = d_{\rm tr}(0, x + \delta).
\]

\begin{definition}[sectors for hyperplane $H_\omega$]
The $k$-th open sector for the max-tropical hyperplane $H^{\max}_\omega$ is
$G^{\max}_\omega(k) = \{x | x_k + \omega_k > x_i + \omega_i \textrm{ for } 1 \leq i(\neq k) \leq d \}$. The $l$-th open sector for the min-tropical hyperplane $H^{\min}_\omega$ is $G^{\min}_\omega(l) = \{x | x_l + \omega_l < x_i + \omega_i \textrm{ for } 1 \leq i(\neq l) \leq d \}$. The $k$-th closed sector for the max-tropical hyperplane $H^{\max}_\omega$ is $\overline{G^{\max}_\omega}(k) = \{x | x_k + \omega_k \geq x_i + \omega_i \textrm{ for } 1 \leq i(\neq k) \leq d\}$. The $l$-th closed sector for the min-tropical hyperplane $H^{\min}_\omega$ is $\overline{G^{\min}_\omega}(l) = \{x | x_l + \omega_l \leq x_i + \omega_i \textrm{ for } 1 \leq i(\neq l) \leq d \}$.
\end{definition}

\begin{lemma}[Lemma 17 in FW point paper]\label{lm:gradient}
For $x \in G^{\max}_\omega(k) \cap G^{\min}_\omega(l)$, $d_{\rm tr}(0, x+\omega) = x_k + \omega_k - x_l - \omega_l$ and its gradient is given as
$\frac{\partial d_{\rm tr}(0, x+\omega)}{\partial \omega_i} 
= \delta_{ik} - \delta_{il}$, where $\delta_{ij}$ is the Kronecker's delta such that
\[
\delta_{ij} = \begin{cases}
   1 & \mbox{if } i = j\\
   0 & \mbox{otherwise.}
\end{cases}
\]
\end{lemma}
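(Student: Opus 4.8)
The plan is to reduce $d_{\rm tr}(0, x+\omega)$ to an explicit affine function of $\omega$ on the given sector and then differentiate it. First I would apply Definition~\ref{def:tropicalMetric} to the pair of points $0$ and $x+\omega$: since $\max_i\{-(x_i+\omega_i)\} = -\min_i\{x_i+\omega_i\}$ and $\min_i\{-(x_i+\omega_i)\} = -\max_i\{x_i+\omega_i\}$, the definition collapses to $d_{\rm tr}(0, x+\omega) = \max_i\{x_i+\omega_i\} - \min_i\{x_i+\omega_i\}$.

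Next I would invoke the sector hypotheses. Membership of $x$ in $G^{\max}_\omega(k)$ says exactly that $x_k+\omega_k > x_i+\omega_i$ for every $i\neq k$, so the maximum in the display above is attained (uniquely) at coordinate $k$, giving $\max_i\{x_i+\omega_i\} = x_k+\omega_k$; symmetrically, $x\in G^{\min}_\omega(l)$ forces $\min_i\{x_i+\omega_i\} = x_l+\omega_l$, and necessarily $k\neq l$ on a nonempty such intersection when $d\geq 2$. Substituting yields $d_{\rm tr}(0,x+\omega) = x_k+\omega_k - x_l-\omega_l$, which is the first assertion.

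For the gradient, the key observation is that the inequalities cutting out $G^{\max}_\omega(k)\cap G^{\min}_\omega(l)$ are strict, hence stable under perturbation: holding $x$ fixed, they persist for every $\omega'$ in a neighborhood of $\omega$, so on that neighborhood the map $\omega'\mapsto d_{\rm tr}(0, x+\omega')$ coincides with the affine map $\omega'\mapsto x_k+\omega'_k - x_l-\omega'_l$. Differentiating this affine map in $\omega_i$ gives $\partial d_{\rm tr}(0,x+\omega)/\partial\omega_i = \delta_{ik}-\delta_{il}$, as claimed. The only point requiring any care — the ``main obstacle'', modest as it is — is precisely this openness remark: it is what guarantees that on the interior of a sector $d_{\rm tr}(0,x+\cdot)$ is genuinely differentiable (indeed affine), so the partial derivatives are literally those of the affine formula, rather than a subgradient selection of the kind one is forced to use along sector boundaries (as the surrounding text does when defining its gradient flow). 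Everything else is a one-line substitution, which is why the original source records this as a direct calculation.
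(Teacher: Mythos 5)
Your proof is correct. The paper itself offers no proof of this lemma---it is imported by citation from the Fermat--Weber point paper---and your argument (rewrite $d_{\rm tr}(0,x+\omega)$ as $\max_i\{x_i+\omega_i\}-\min_i\{x_i+\omega_i\}$, use the strict sector inequalities to identify the argmax as $k$ and the argmin as $l$, and observe that strictness makes the resulting formula affine in $\omega$ on a neighborhood, so the partial derivatives are literally $\delta_{ik}-\delta_{il}$) is precisely the ``direct calculation'' the citation stands in for; your remark that openness of the sector is what licenses genuine differentiability, as opposed to the subgradient selection needed on sector boundaries, is the one point worth making explicit and you make it.
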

\begin{lemma}[Lemma 29 in Barnhill and Sabol et al.]\label{lm:lower}
The gradient given in Lemma \ref{lm:gradient} changes when and only when $w$ cross the sector boundary defined by the tropical min and max hyperplanes of $x$.
\end{lemma}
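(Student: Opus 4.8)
\emph{Proof plan.} The plan is to read Lemma~\ref{lm:lower} as a statement about the piecewise-linear function $\omega \mapsto d_{\rm tr}(0,x+\omega)$ on $\mathbb{R}^d/\mathbb{R}{\bf 1}$ with $x$ held fixed. Since the condition ``$x_i+\omega_i$'' is symmetric in $x$ and $\omega$, the max- and min-tropical hyperplanes $H^{\max}_x$ and $H^{\min}_x$ of $x$ are exactly the loci where $\arg\max_i(x_i+\omega_i)$, respectively $\arg\min_i(x_i+\omega_i)$, fails to be a singleton, and the open sectors $G^{\max}_x(k)\cap G^{\min}_x(l)$ are exactly the regions on which these are the fixed singletons $\{k\}$ and $\{l\}$. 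The ``only when'' direction is then immediate from Lemma~\ref{lm:gradient}: on each such sector the gradient equals the vector $e_k-e_l$ having $+1$ in coordinate $k$, $-1$ in coordinate $l$, and zero elsewhere, which does not depend on $\omega$; hence the gradient can change value only when $\omega$ leaves one of these sectors, i.e.\ only when $\omega$ meets the boundary $H^{\max}_x\cup H^{\min}_x$. To make this airtight I would note that each $G^{\max}_x(k)$ is an intersection of open half-spaces in $\mathbb{R}^d/\mathbb{R}{\bf 1}\cong\mathbb{R}^{d-1}$, hence convex, that the $d$ of them are pairwise disjoint with union $(\mathbb{R}^d/\mathbb{R}{\bf 1})\setminus H^{\max}_x$, and therefore that they are precisely its connected components; likewise for the min-sectors. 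Consequently the refined sectors $G^{\max}_x(k)\cap G^{\min}_x(l)$ are the connected components of the complement of $H^{\max}_x\cup H^{\min}_x$, so any path avoiding this union stays in a single refined sector and the gradient is constant along it.

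For the ``when'' direction I would show that crossing the sector boundary always changes the gradient. First, the map $(k,l)\mapsto e_k-e_l$ is injective, since one recovers $k$ from the unique $+1$ coordinate and $l$ from the unique $-1$ coordinate (here $d\ge 2$, and away from the hyperplanes $k\ne l$, so $e_k-e_l\ne 0$ in the quotient). Next, a transversal crossing of $H^{\max}_x$ carries $\omega$ from a sector with $\arg\max_i(x_i+\omega_i)=\{k\}$ into one with $\arg\max=\{k'\}$ for some $k'\ne k$---this is precisely what it means for $\omega$ to pass through the max-tropical hyperplane of $x$---while $l$ may or may not change; in either case the pair $(k,l)$ changes, so by injectivity the gradient $e_k-e_l$ changes. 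The identical argument applies to crossings of $H^{\min}_x$. Combining the two directions yields the claim: the gradient of Lemma~\ref{lm:gradient} changes precisely when $\omega$ crosses the sector boundary determined by $H^{\max}_x$ and $H^{\min}_x$.

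The delicate points, and where I would spend the care, are the degenerate crossings: when $\omega$ meets $H^{\max}_x$ and $H^{\min}_x$ at the same time, or meets a lower-dimensional stratum of $H^{\max}_x$ where three or more of the $x_i+\omega_i$ tie for the maximum. The connected-component description handles these uniformly---such an $\omega$ lies on the boundary of every refined sector whose closure contains it, and any two of those sectors carry distinct $(k,l)$, hence distinct gradients, so $\omega$ remains a point at which the gradient changes---but one must phrase ``crossing the sector boundary'' so that it covers these strata and does not secretly require the path to be in general position. I would also record that everything is invariant under the identification $\mathbb{R}^d/\mathbb{R}{\bf 1}\cong\mathbb{R}^{d-1}$, since the sectors and hyperplanes in Lemma~\ref{lm:gradient} are all invariant under translation by ${\bf 1}$.
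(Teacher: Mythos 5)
The paper does not actually prove this lemma: it is imported verbatim as ``Lemma 29 in Barnhill and Sabol et al.''\ (the Fermat--Weber/gradient paper \cite{BSYM}), so there is no in-paper argument to compare against. Your self-contained proof is correct and is essentially the canonical argument one would give. The key observations are all in place: by the symmetry of $x_i+\omega_i$ in $x$ and $\omega$, the sectors of Lemma~\ref{lm:gradient} become, in $\omega$-space, the open convex cells $G^{\max}_x(k)\cap G^{\min}_x(l)$, which (being convex, pairwise disjoint, and covering the complement of $H^{\max}_x\cup H^{\min}_x$) are exactly the connected components of that complement; on each cell the gradient is the constant vector $e_k-e_l$, giving the ``only when'' direction; and the map $(k,l)\mapsto e_k-e_l$ is injective even modulo $\mathbb{R}{\bf 1}$ (since all these vectors have coordinate sum zero), so passing into a different component necessarily changes the gradient, giving the ``when'' direction, including at lower-dimensional strata where several indices tie. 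The one point you rightly flag but should not oversell as resolved is purely semantic: a path that merely touches $H^{\max}_x\cup H^{\min}_x$ and returns to the same refined sector does not change the gradient, so ``crosses'' in the lemma statement must be read as ``passes from one refined sector to a different one''; with that reading your argument is airtight. Your proof also makes explicit something the paper leaves implicit, namely that the relevant hyperplane arrangement is the one with apex determined by $x$, viewed in the weight variable $\omega$, which is exactly how the lemma is used in the surrounding discussion of the CW attack.
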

\begin{remark}
   Since the problem in \eqref{eq:tropicalCW} is a special case of the problem in \cite{BSYM}.  By Theorem 2 in \cite{BSYM}, the gradient in Lemma \ref{lm:gradient} achieves its minimum. Barnhill and Sabol et al.~discussed that when we have a lower dimensional set of optimal solutions pass through the set by Lemma \ref{lm:lower}.  We observed this behaviors in our experiments.
\end{remark}

\subsection{Defenses Against Adversarial Attacks}\label{section:AT}
The adversarial training (AT) defense, initially proposed in~\cite{goodfellow2015explaining}, consists of training a model on both clean and perturbed inputs. Madry et al.~\cite{madry2019deep} used Projected Gradient Descent (PGD) on the loss function to generate a 'best' first-order attack, and continued to train the model until it classified the adversarial inputs correctly. Incorporating these attacks in training effectively results in a more linear decision boundary in the neighborhood of natural inputs~\cite{moosavidezfooli2018robustness}. By reducing the curvature of the decision boundary, the classifier becomes less vulnerable to 'shortcut' attacks that use the loss gradient to perturb across the closest decision boundary.

Regularization-based defenses take a more direct approach to achieve a similar effect. Often, they penalize small distances between observations and decision boundaries, effectively pushing the closest segments away and straightening the loss function. This is achieved by incorporating a $\ell_p$ $\epsilon$-ball into the defenders problem either by minimizing the loss within an $\epsilon$-ball of the input as in ~\cite{wong2018provable}, or by directly widening the linear region around training inputs with respect to some $\ell_p$ norm~\cite{croce2019provable},~\cite{croce2020provable}.

With some exceptions ~\cite{croce2020provable}, robust models are robust only to the specific norm on which they were trained ~\cite{tramèr2019adversarial}, and require significant computational resources to train, either through iterative training as in AT or through calculation of the regularization term. In this paper we show simultaneous robustness against attacks based on $\ell_1$, $\ell_2$, and $\ell_{\infty}$ norms, using less computational time than adversarial training.

\section{Conclusion}
Motivated by the low-rank nature of image classification, we apply tools from  tropical geometry with the max-plus algebra over the tropical semiring to CNNs, and introduce a {\em tropical CNN}. We also demonstrate that this novel CNN is robust against white box adversarial attacks via computational experiments on image datasets.  We show that it is especially robust against CW attacks.    

In this paper we focus on image datasets.  Deep neural networks have also been applied to text analysis (for example, ~\cite{Suissa2023}).  For text, researchers apply tools from linguistics that were originally developed for genetics and genomics.  Yoshida et al.~in \cite{tropicalNN} applied neural networks with tropical embedded layer to phylogenomics.  Therefore it would be interesting to apply tropical CNNs to datasets for text analysis and mining.

Theorem \ref{upperbound} assumes Gaussian data about $w_c$. However, \ref{fig:contour_mnist} suggests that the convolutional and ReLU layers prior to the embedding layer generate features that do not have a symmetric distribution about $w_c$, indicating the possibility of a tighter upper bound on $P(d_{\rm tr}(w_c, X) \geq r)$.

From the computational results in Section \ref{subsec:result} on tropical CNNs with adversarial training, we observe that adversarial training makes the robustness of tropical CNNs similar to  the  robustness of CNNs with ReLU activators with  adversarial training (i.e., adversarial training makes tropical CNNs less robust against adversarial attacks).  These results suggest that the adversarial training effect \cite{moosavidezfooli2018robustness} may be negating the effect of the tropical embedding layer.  It is interesting to investigate the geometry of the effects of adversarial training to tropical CNNs.

\section*{Acknowledgement}
RY is partially supported by the National Science Foundation (Grant No. DMS-1929348).
KM is partially supported by JSPS KAKENHI Grant Numbers JP22K19816 and JP22H02364.

\bibliographystyle{unsrtnat} 
\bibliography{refs}
\appendix

\section{Additional Computational Results}\label{appendix:experiment}

Here we capture the details of our implementation of a tropical CNN experiment that was briefly details in Section \ref{sec:comp_experiments}. 

\subsection{Model Construction}\label{appendix:model_construction}

Table \ref{table:base-model} shows the neural network construction of our base model used to train models on the MNIST and SVHN datasets. 

{\begin{table}[h]
\centering
\caption{Base Model for MNIST and SVHN experiments.}
\footnotesize
\begin{tabular}{c|c|c}
\textbf{Layer} & \textbf{Activation} & \textbf{Key Parameters} \\
\hline
Convolution & ReLU & 64, 3x3 windows, 1x1 strides \\
Max Pooling & & 2x2 windows, non-intersecting \\
Convolution & ReLU & 64, 3x3 windows, 1x1 strides \\
Max Pooling & & 2x2 windows, non-intersecting \\
Convolution & ReLU & 64, 3x3 windows, 1x1 strides \\
Flatten & & Flatten feature map \\
Fully connected & ReLU & 64 neurons \\
\end{tabular}
\label{table:base-model}
\end{table}

The models used in our experiment we refer to as Tropical, ReLU, Maxout, and MMR. The Tropical model is defined as adding 1 tropical embedding layer to the base model as defined in Equation \ref{eq:embed}, which produces the logits used for classification instead of a typical, fully-connected layer. Our logit logic differs from a typical full-connected layer because the minimum value is considered the correct class and not the maximum value. The value's in our case are the tropical distance from the output of the base model to the weights of final layer, and thus we want to minimize this distance. To account for this in our experiment we take $z=50-d_{\rm tr}(x,w)$, where $z$ is our logits, $x$ the output from the base model, and $w$ the trained weights of the final layer. By taking the negative of the distance values ($d_{\rm tr}(x,w)$), this allows us to keep the logic that the maximum value is correct class and adding an arbitrary amount\footnote{50 in our case because $d_{\rm tr}(x,w)$ was typically observed between 5 and 30. In practice, one could take the negative of the output without adding this arbitrary amount.} only served to prevent any logic issues among our attack algorithms in dealing with negative numbers. For the ReLU model we connect a typical, fully-connected layer to the base model in order to produce logits for classification. For the Maxout models, we connect the layers outlined in Table \ref{table:maxout-model} to the base model. This implements the maxout units as defined in~\cite{goodfellow2013maxout}, which appears to show similar construction as our tropical model given the logits are produced by subtracting 2 values, but we will show through our experiment results that our model and the maxout model behave very differently. The MMR model is identical to the ReLU model, but uses the MMR-Universal regularizer defined in~\cite{croce2020provable} and adapted to our model. The MMR-Universal regularizer attempts "enforce robustness wrt $\ell_1$- and $\ell_\infty$-perturbations and show how that leads to the first provably robust models wrt any $\ell_p$-norm for $p\ge1$"~\cite{croce2020provable}. For the Tropical, ReLU, and Maxout models, we also train and evaluate models that are trained using adversarial examples as defined in Section \ref{section:AT}.

\begin{table}[ht]
\centering
\caption{Maxout Model}
\footnotesize
\begin{tabular}{c|c|c}
\textbf{Layer} & \textbf{Activation} & \textbf{Key Parameters} \\
\hline
Fully connected 1 & ReLU & 10*100 neurons,connected to base \\
Fully connected 2 & ReLU & 10*100 neurons,connected to base \\
Dropout & & 0.5 dropout rate \\
Maxout 1 & Maxout & 10 units, max from FC1 \\
Maxout 2 & Maxout & 10 units, max from FC2 \\
Final Layer & Maxout 1 - Maxout 2 &  \\
\end{tabular}
\label{table:maxout-model}
\end{table}

For the MNIST and SVHN dataset, we compared the tropical model's performance against attacks to the ReLU, Maxout, and MMR models. However, for CIFAR-10 we just compare to the ReLU and Maxout models, as re-creating the MMR-Universal regularizer for a model as large as ResNet50 exceeded our computational budget for the research. 

\subsection{Tropical Layer in Tensorflow}

We built our models in Tensorflow utilizing the functional API framework. To build a tropical CNN using this framework, we need to create a Layer class. Below is the python code used to build the layer:

\begin{lstlisting}[style=pythonstyle, caption={Tropical Layer Class in Tensorflow}, label=pythoncode]
class TropEmbedMaxMin(Layer):
    '''
    Custom TensorFlow layer implementing Tropical Embedding for max-min distances.
    '''

    def __init__(self, units=2, initializer_w=initializers.random_normal, lam=0.0, axis_for_reduction=2, **kwargs):
        '''
        Initializes the TropEmbedMaxMin layer.

        Parameters
        ----------
        units : int, optional
            Number of output units (default is 2).
        initializer_w : initializer function, optional
            Weight initializer function (default is random_normal).
        lam : float, optional
            Regularization parameter (default is 0.0).
        axis_for_reduction : int, optional
            Axis for reduction in distance calculation (default is 2).
        **kwargs : dict
            Additional keyword arguments.
        '''
        super(TropEmbedMaxMin, self).__init__(**kwargs)
        self.units = units
        self.initializer_w = initializer_w
        self.lam = lam
        self.axis_for_reduction = axis_for_reduction

    def build(self, input_shape):
        input_dim = input_shape[-1]  # Extract the last dimension from input_shape
        self.w = self.add_weight(name='tropical_fw',
                                 shape=(self.units, input_dim),
                                 initializer=self.initializer_w,
                                 regularizer=TropRegIncreaseDistance(lam=self.lam),
                                 trainable=True)
        self.bias = self.add_weight(name='bias',
                                    shape=(self.units,),
                                    initializer="zeros",
                                    trainable=True)
        super(TropEmbedMaxMin, self).build(input_shape)

    def call(self, x):
        '''
        Performs the forward pass of the TropEmbedMaxMin layer.

        Parameters
        ----------
        x : tensorflow tensor object
            Input tensor.

        Returns
        -------
        trop_distance : tensorflow tensor object
            Output tensor after applying Tropical Embedding for max-min distances.
        '''
        x_reshaped = reshape(x, [-1, 1, self.w.shape[-1]])  # Reshape input data
        x_for_broadcast = repeat_elements(x_reshaped, self.units, 1)  # Repeat input for broadcasting
        result_addition = x_for_broadcast + self.w  # Calculate addition of input and weights
        trop_distance = reduce_max(result_addition, axis=(self.axis_for_reduction)) - reduce_min(result_addition, axis=(self.axis_for_reduction)) + self.bias  # Calculate tropical distances with bias
        return trop_distance


\end{lstlisting}

\subsection{Attack Hyperparameters}

All attacks used, except for CW, require a norm constraint hyperparameter\footnote{The specified values of all $\epsilon$ values outlined here are chosen considering that the images in these datasets are normalized such that pixel values lie within the range [0,1].}, $\epsilon$. The CW algorithm attempts to find adversarial examples while minimizing the $\ell_2$ perturbation, but does not constrain the perturbation in the $\ell_2$-ball, so we do not define an $\epsilon$ for it. Given this, for all our $\ell_\infty$, we use common $\epsilon's$ given the dataset. For MNIST, we use $\epsilon_\infty=0.1$, for SVHN and CIFAR-10 we use $\epsilon_\infty=\frac{4}{255}$. For all datasets we compute the $\ell_2$ constraint ($\epsilon_2$) as $\epsilon_2=\sqrt{\epsilon_\infty^2 * n}$ where $n$ is the number of input elements/pixels. For MNIST, $\epsilon_2=2.8$, and for SVHN and CIFAR-10, $\epsilon_2=0.87$. For all datasets we compute the $\ell_1$ constraint ($\epsilon_1$) as $\epsilon_1=\epsilon_2*2$. For MNIST, $\epsilon_1=5.6$, and for SVHN and CIFAR-10, $\epsilon_1=1.74$. The $\epsilon_2$ and $\epsilon_1$ constraints provided were chosen because they achieved similar attack performance to the $\ell_\infty$ attacks.

Beyond the $\epsilon$ constraint choices, we outline the other key hyperparameters for each attack used in the experiment:

\begin{itemize}[itemsep=0pt,parsep=2pt,left=0pt]
\small
\item Sparse $\ell_1$ Descent (SLIDE):
\begin{itemize}[itemsep=0pt,parsep=0pt,left=0pt,label=-]
    \item steps = 100
    \item $\epsilon$ = {MNIST: 5.6, SVHN/CIFAR: 1.74}
    \item step size = 0.01
    \item percentile = 99
\end{itemize}

\item $\ell_2$ and $\ell_\infty$ Projected Gradient Descent (PGD):
\begin{itemize}[itemsep=0pt,parsep=0pt,left=0pt,label=-]
    \item steps = 100
    \item $\ell_2$ $\epsilon$ = {MNIST: 2.8, SVHN/CIFAR: 0.87}
    \item $\ell_\infty$ $\epsilon$ = {MNIST: 0.1, SVHN/CIFAR: $\frac{4}{255}$}
    \item step size = 0.01
    \item random start within epsilon ball = True
\end{itemize}

\item Carlini and Wagner (CW):
\begin{itemize}[itemsep=0pt,parsep=0pt,left=0pt,label=-]
    \item abort early = True
    \item max iterations per binary search step = 1000
    \item number of binary search steps = 10
    \item confidence = 0
    \item initial constant = 10
    \item learning rate = 0.1
\end{itemize}

\item Simultaneous Perturbation Stochastic Approximation (SPSA) 
\begin{itemize}[itemsep=0pt,parsep=0pt,left=0pt,label=-]
    \item $\epsilon$ = {MNIST: 0.1, SVHN/CIFAR: $\frac{4}{255}$}
    \item number of iterations = 100
    \item learning rate = 0.01
    \item delta = 0.01
    \item spsa samples = 128
    \item spsa iters = 1
    \item early stop loss threshold = 0.0
\end{itemize}
\end{itemize}

\subsection{Model Complexity and Computation Times}\label{appendix:comp_times}

We defined our model structure in Appendix Section \ref{appendix:model_construction}, including the base models used as well as each model's structure connected to the base model. To get a sense of the size of each model in terms of trainable parameter, Table \ref{tab:trainable_parameters} shows the raw count of trainable parameters as reported using the Tensorflow "summary()" method.

\begin{table}[htbp]
\centering
\caption{Trainable Parameters}
\begin{tabular}{lccc}
\toprule
 & MNIST & SVHN & CIFAR \\
\midrule
ReLU & 112,074 & 141,898 & 23,666,378 \\
Maxout & 241,424 & 271,248 & 23,795,728 \\
Tropical & 112,074 & 141,898 & 23,666,378 \\
MMR & 112,074 & 141,898 & - \\
\bottomrule
\end{tabular}\label{tab:trainable_parameters}
\end{table}

The experiment computation (training models and attacking models) was completed using high performance computing (HPC) resources. Each model except for the MMR SVHN model were trained for 100 epochs. In order to get a sense of the computational burden imposed by adding our tropical layer, we captured computational times across the runs. Please note that the compute resources utilized were not optimized and not the exact same across each model. Much of the resource allocation was done by trial-and-error. That said, we will try to give the best apples-to-apples comparison below of computation times, given the models that used the same resources.
Table \ref{tab:normal_training} shows the training times in seconds for the models that did not employ any adversarial or robust training techniques. Each model was scheduled to run on the HPC cluster with 4 GPU's and 10 CPU's, but the SVHN model trained with 0 GPU's and 10 CPU's. Each column used the same resources.

\begin{table}[htbp]
\centering
\caption{Computation time in seconds. Each column used the same GPU/CPU compute resources.}
\begin{tabular}{lccc}
\toprule
 & MNIST & SVHN & CIFAR \\
\midrule
ReLU & 253 sec & 4,318 sec & 2,445 sec \\
Maxout & 265 & 4,467 & 2,412 \\
Tropical & 268 & 4,267 & 2,409 \\
\bottomrule
\end{tabular}\label{tab:normal_training}
\end{table}

Table \ref{tab:at_training} shows the training time in seconds for the models trained using he adversarial training method of training on perturbed examples using $\ell_\infty$ PGD. Each model was trained with 8 GPU's and 30 CPU's. 

\begin{table}[htbp]
\centering
\caption{Computation time in seconds. Each column used the same GPU/CPU compute resources.}
\begin{tabular}{lccc}
\toprule
 & MNIST & SVHN & CIFAR \\
\midrule
ReLU+AT & 5,323 sec & 6,738 sec & 54,855 sec \\
Maxout+AT & 6,012 & 7,511 & 54,978 \\
Tropical+AT & 5,870 & 7,214 & 54,791 \\
\bottomrule
\end{tabular}\label{tab:at_training}
\end{table}

Table \ref{tab:mmr_training} shows the training times for the 2 MMR models built. Recall that the ResNet-50 base model used for the other CIFAR-10 models was infeasible for the project computational resources we had and thus was not constructed. The training of both models occurred with 0 GPU's and 30 CPU's and had to be done in batches of 32 examples. The research team struggled to find the correct HPC configuration to train the model with GPU's as the MMR model requires high dimension matrix computations and the memory and batch size configuration could not be found to be able to run on the GPU's available to us, thus 30 CPU's ended up being optimal in terms of training time on strictly CPU resources.

\begin{table}[htbp]
\centering
\caption{Computation time in seconds. 30 CPU cores were used to train the MMR model.}
\begin{tabular}{lccc}
\toprule
 & MNIST & SVHN (only 15 epochs completed) & CIFAR \\
\midrule
MMR & 97,380 sec & 261,900 sec & - \\
\bottomrule
\end{tabular}\label{tab:mmr_training}
\end{table}

The tables should show ample evidence that our model, given the same resources, achieves nearly computational time parity with its ReLU and maxout counterpart and is notably less expensive than powerful techniques such as employing the MMR-Universal regularizer. Due to the operations that take place (max and min) within our tropical layer, one can expect computation time to be higher if the problem were higher dimension. Particularly, in our case we were building models to classify 10 classes. Should the class number be higher, or the layer preceding the logit layer be higher, this might strain the computational time parity observed in our experiment. 

\section{Decision Boundaries of ReLU Neural Networks}\label{appendix:relu_decision}

Expanding on the decision boundary toy problem articulated in Example \ref{eg:contour_mnist}, we can build an analogous visual of the Voronoi cells that define the boundary for a ReLU activated model of the exact same construction as our tropical model above, the only difference being we use a normal, affine fully-connected layer to produce our logits. Because the projection from $\mathbb{R}^3$ to $\mathbb{R}^2$ is not applicable in the ReLU model, we must visualize in $\mathbb{R}^2$. Figure \ref{fig:relu_voronoi_cells} shows three ``slices'' of $\mathbb{R}^3$. 

\begin{figure}[h!]
    \centering
    \includegraphics[width=0.9\textwidth]{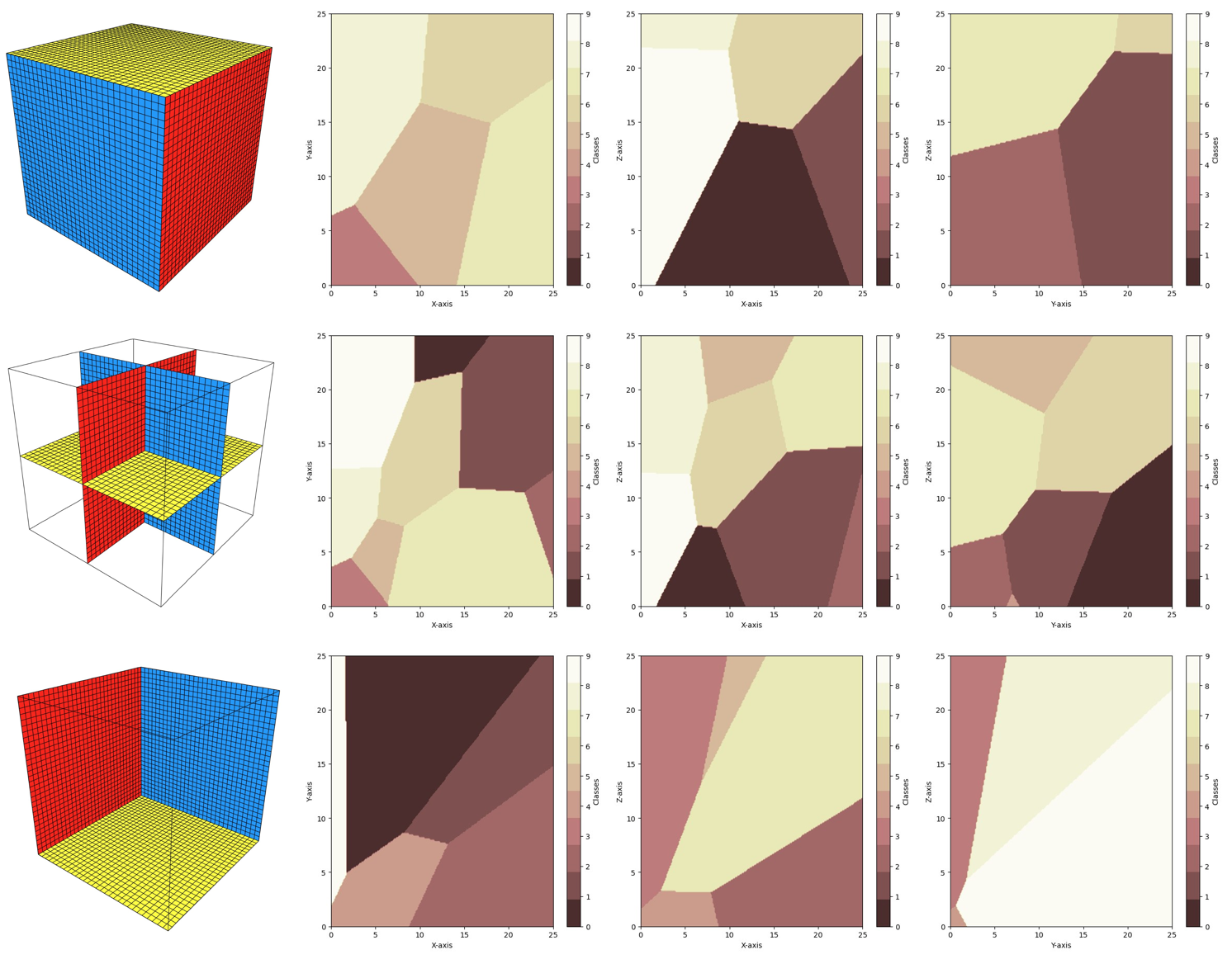}
    \caption{Voronoi diagram for MNIST-trained example ReLU model.}
    \label{fig:relu_voronoi_cells}
\end{figure}

\end{document}